\def\eqref#1{equation~\ref{#1}}
\def\1{\bm{1}}
\DeclareMathAlphabet{\mathsfit}{\encodingdefault}{\sfdefault}{m}{sl}
\SetMathAlphabet{\mathsfit}{bold}{\encodingdefault}{\sfdefault}{bx}{n}
\newtheorem{proposition}{Proposition}
\title{Rewiring with Positional Encodings 
for Graph Neural Networks}
\author{\name Rickard Br\"uel-Gabrielsson \email brg@mit.edu \\
      \addr Massachusetts Institute of Technology \\
        MIT-IBM Watson AI Lab
      \AND
      \name Mikhail Yurochkin \email mikhail.yurochkin@ibm.com  \\
      \addr MIT-IBM Watson AI Lab \\
    IBM Research
      \AND
      \name Justin Solomon \email jsolomon@mit.edu \\
    \addr Massachusetts Institute of Technology \\
    MIT-IBM Watson AI Lab
}
\begin{document}

\maketitle

\begin{abstract}

Several recent works use positional encodings to extend the receptive fields of graph neural network (GNN) layers equipped with attention mechanisms. These techniques, however, extend receptive fields to the complete graph, at substantial computational cost and risking a change in the inductive biases of conventional GNNs, or require complex architecture adjustments. As a conservative alternative, we use positional encodings to expand receptive fields to $r$-hop neighborhoods. More specifically, our method augments the input graph with additional nodes/edges and uses positional encodings as node and/or edge features. We thus modify graphs before inputting them to a downstream GNN model, instead of modifying the model itself. This makes our method model-agnostic, i.e., compatible with any of the existing GNN architectures. We also provide examples of positional encodings that are lossless with a one-to-one map between the original and the modified graphs. We demonstrate that extending receptive fields via positional encodings and a virtual fully-connected node significantly improves GNN performance and alleviates over-squashing using small $r$. We obtain improvements on a variety of models and datasets and reach competitive performance using traditional GNNs or graph Transformers.
\end{abstract}



%
%

\section{Introduction}





GNN layers typically embed each node of a graph as a function of its neighbors' ($1$-ring's) embeddings from the previous layer; that is, the \emph{receptive field} of each node is its $1$-hop neighborhood. Hence, at least $r$ stacked GNN layers are needed for nodes to get information about their $r$-hop neighborhoods. \citet{underreaching} and \citet{alon2021on} identify two broad limitations associated with this structure:  \emph{under-reaching} occurs when the number of layers is insufficient to communicate information between distant vertices, while \emph{over-squashing} occurs when certain edges act as bottlenecks for information flow.
%

Inspired by the success of the Transformer in natural language processing \citep{attentionisallyouneed}, recent methods expand node receptive fields to the whole graph \citep{dwivedi2021generalization, microsoftgraphs}. 
%
%
Since they effectively replace the topology of the graph with that of a complete graph, these works propose \emph{positional encodings} that communicate the connectivity of the input graph as node or edge features. 
%
%
As these methods operate on fully-connected graphs, the computational cost of each layer is quadratic in the number of nodes, obliterating the sparsity afforded by conventional $1$-ring based architectures. 
Moreover, the success of the $1$-ring GNNs suggests that local feature aggregation is a useful inductive bias, which has to be learned when the receptive field is the whole graph,
leading to slow and sensitive training.

In this paper, we expand receptive fields from $1$-ring neighborhoods to $r$-ring neighborhoods, where $r$ ranges from 1 (typical GNNs) to $R$, the diameter of the graph (fully-connected).  That is, we augment a graph with edges between each node and all others within distance $r$ in the input topology. We demonstrate the compute-performance trade-offs for different $r$-values and show that performance is significantly improved using fairly small $r$ and carefully-chosen positional encodings annotating this augmented graph. This simple but effective approach can be combined with any GNN.


\textbf{Contributions.} 
We apply GNN architectures to graphs augmented with edges connecting vertices to their peers within distance $\leq r$. 
Our contributions are as follows: 
(i) We increase receptive fields using a simple pre-processing of graphs with positional encodings as edge and node features.
(ii) We compare $r$-hop positional encodings on the augmented graph, specifically lengths of shortest paths, spectral computations, and powers of the graph adjacency matrix.
(iii) We demonstrate that GNNs, with limited computational budget and without changing the architecture, can benefit from positional encodings and expanded receptive views that are smaller than the fully-connected setting, and we illustrate the compute-performance trade-offs.



\section{Related Work}
\label{sec:related-work}

The Transformer has permeated deep learning \citep{attentionisallyouneed}, with state-of-the-art performance in NLP \citep{devlin2018bert}, vision \citep{pmlr-v80-parmar18a}, and genomics \citep{NEURIPS2020_c8512d14}. Its core components include multi-head attention, an expanded receptive field, positional encodings, and a CLS-token (virtual global source and sink nodes). Several works adapt these constructions to GNNs. For example, the Graph Attention Network (GAT) 
performs attention over the neighborhood of each node, but does not generalize multi-head attention using positional encodings \citep{veli2018graph}. Recent works use Laplacian spectra, node degrees, and shortest-path lengths as positional encodings to expand attention to all nodes \citep{DBLP:journals/corr/abs-2106-03893, dwivedi2021generalization, NEURIPS2020_94aef384, microsoftgraphs}. 
%
%
Several works also adapt attention mechanisms to GNNs \citep{NEURIPS2019_9d63484a, DBLP:journals/corr/abs-1911-07470, DBLP:journals/corr/abs-2003-01332, baek2021accurate, veli2018graph, wang2021multihop, zhang2020graphbert, shi2021masked}. Furthermore, there is a recent line of work that seeks to improve Transformers on graphs: \cite{NEURIPS2022_5d4834a1} provide a taxonomy of positional encodings and present an alternative Transformer architecture for sub-quadratic attention; \cite{shirzad2023exphormer} build on this work by introducing a sparse attention mechanism through virtual global nodes and expander graphs. Lastly, \cite{kim2022pure} demonstrate  that Transformers without graph-specific modifications can obtain promising results in graph learning both in theory and practice, by simply treating nodes and edges as independent tokens. In contrast with these papers, our work is architecture-agnostic, demonstrating that augmentation and positional encodings can benefit Transformer-free GNN architectures and Transformer GNNs alike.



\cite{NEURIPS2018_e77dbaf6, pmlr-v97-gao19a} and others introduce hierarchical pooling and improved information flow by shrinking the graph rather than extending it. This, therefore, represents an alternative to the multi-hop aggregation approach adopted in our work.

Path and distance information has been incorporated into GNNs more generally. \citet{ijcai2019-0569} introduce the Shortest Path Graph Attention Network (SPAGAN), whose layers incorporate path-based attention via shortest paths between a center node and distant neighbors, 
using an involved hierarchical path aggregation method to aggregate a feature for each node. Like us, SPAGAN introduces the $\leq k$-hop neighbors around the center node as a hyperparameter; their model, however, has hyperparameters controlling path sampling. 
%
%
%
Beyond SPAGAN, 
\citet{chen2019pathaugmented} concatenate node features, edge features, distances, and ring flags to compute attention probabilities. 
\citet{li2020distance} show that distance encodings (i.e., one-hot feature of distance as an extra node attribute) obtain more expressive power than the 1-Weisfeiler-Lehman test. 
\citet{chen2021on} introduce graph-augmented multi-layer perceptrons (MLPs) that incoroprate multi-hop connections via powers of the adjacency matrix and focus on establishing the expressive power of graph-augmented MLPs relative to graph neural networks. \citet{sun2021scalable} also use powers of the adjacency matrix but for multi-hop connections combined with self-label enhanced training.
Graph-BERT introduces multiple positional encodings to apply Transformers to graphs and operates on sampled subgraphs to handle large graphs \citep{zhang2020graphbert}. \citet{ijcai2019-0569} introduce the Graph Transformer Network (GTN) for learning a new graph structure, which identifies ``meta-paths'' and multi-hop connections to learn node representations. \citet{magna_ijcai21} introduce Multi-hop Attention Graph Neural Network (MAGNA) that uses diffusion to extend attention to multi-hop connections. \citet{frankel2021meshbased} extend GAT attention to a stochastically-sampled neighborhood of neighbors within 5-hops of the central node. \citet{edgenets} introduce EdgeNets, which enable flexible multi-hop diffusion. \citet{NEURIPS2019_ccdf3864} generalize spectral graph convolution and GCN in block Krylov subspace forms. \citet{abuelhaija2019mixhop} introduce a specialized MixHop Graph Convolution Layer for higher-order message passing. \citet{you2019positionaware} capture nodes' positions by sampling anchor nodes and learning a non-linear distance-weighted aggregation from target nodes to the anchor-sets. \cite{feng2023powerful, nikolentzos2020khop} explore the expressive power of $r$-hop GNNs and introduces specific $r$-hop GNN architectures aimed at augmenting this expressive capability. While the enhanced theoretical expressive power of GNNs achieved through $r$-hops aligns with our approach, our primary focus is on examining the empirical performance improvements that result from decoupling the $r$-hop rewiring from the GNN, as evaluated across various GNNs within a constrained computational budget.






Each layer of our GNN attends to the $r$-hop neighborhood around each node. Unlike SPAGAN and Graph-BERT, our method is model agnostic and does not perform sampling, avoiding their sampling-ratio and number-of-iterations hyperparameters. Unlike GTN, we do not restrict to a particular graph structure. Broadly, our approach does not require architecture or optimization changes. Thus, our work also joins a trend of decoupling the input graph from the graph used for information propagation \citep{https://doi.org/10.48550/arxiv.2202.11097}. For scalability, \citet{NIPS2017_5dd9db5e} sample from a node’s local neighborhood to generate embeddings and aggregate features, while \citet{zhang2018bayesian} sample to deal with topological noise. \citet{DBLP:journals/corr/abs-2004-11198} introduce Scalable Inception Graph Neural Networks (SIGN), which avoid sampling by precomputing convolutional filters. \citet{DBLP:conf/iclr/KipfW17} preprocess diffusion on graphs for efficient training. \citet{topping2021understanding, pmlr-v202-nguyen23c} use graph curvature to rewire graphs and combat over-squashing and bottlenecks.


In contrast, our work does not use diffusion, curvature, or sampling, but expands receptive fields via Transformer-inspired positional encodings. In this sense, we avoid the inductive biases from pre-defined notions of diffusion and curvature, and since we do not remove connectivity, injective lossless changes are easy to obtain.








\section{Preliminaries and Design}



Let $G=(V,E,f_v,f_e)$ denote a directed graph with nodes $V \subset \mathbb{N}_0$ and edges $E \subseteq V \times V$, and let $\mathcal{G}$ be the set of graphs. For each graph, let functions $f_v: V \rightarrow \mathbb{R}^{d_v}$ and $f_e: E \rightarrow \mathbb{R}^{d_e}$ denote node and edge features, respectively, and we let $R$ denote the diameter of the graph.
We consider learning on graphs, specifically node classification and graph classification. At inference, the input is a graph $G$. For node classification, 
the task is to predict a node label $l_v(v)\in\mathbb R$ for each vertex $v \in V$. Using the node labels, the homophily of a graph is defined as the fraction of edges that connect nodes with the same labels \citep{Homophily}. 
For graph classification, 
the task is to predict a label $l_G\in\mathbb R$ for the entire graph $G$. 

Given the tasks above, GNN architectures typically ingest a graph $G=(V,E,f_v,f_e)$ and output either a label or a per-node feature.  One can view this as an abstraction; e.g.\ a GNN for graph classification is a map $F_\theta:\mathcal G \to \mathbb R^n$ with learnable parameters $\theta$. These architectures vary in terms of how they implement $F_\theta$. Some key examples include the following:
(i) Spatial models \citep{DBLP:conf/iclr/KipfW17} use the graph directly, computing node representations in each layer by aggregating representations of a node and its neighbors ($1$-ring).
(ii) Spectral models \citep{DBLP:journals/corr/BrunaZSL13} use the eigendecomposition of the graph Laplacian to perform spectral convolution.
(iii) Diffusion models \citep{magna_ijcai21, 10.5555/3454287.3455484} use weighted sums of powers of the adjacency matrix to incorporate larger neighborhoods ($r$-hops). 
(iv) In Transformers \citep{DBLP:journals/corr/abs-2106-03893, dwivedi2021generalization, NEURIPS2020_94aef384, microsoftgraphs}, each node forms a new representation by self-attention over the complete graph ($R$-hop neighborhood) using positional encodings.
These approaches incorporate useful inductive biases while remaining flexible enough to learn from data. 

Spatial models have been extremely successful, but recent work shows that they struggle with under-reaching and over-squashing \citep{alon2021on}. Spectral approaches share similar convolutional bias as spatial models and face related problems \citep{DBLP:conf/iclr/KipfW17}. 
On the other hand, Transformers with complete attention and diffusion aim to alleviate the shortcomings of spatial models and show promising results. Due to complete attention, Transformers  
carry little inductive bias but are also computationally expensive. Diffusion explicitly incorporates the inductive bias that distant nodes should be weighted less in message aggregation, limiting its breadth of applicability. 

 
We alleviate  under-reaching and over-squashing while avoiding the computational load of complete attention by incorporating a more general proximity bias than diffusion without committing to a specific model. 
%
Our method is built on the observation that $F_{\theta}$ can be trained to ingest modified versions of the original graph that better communicate structure and connectivity. Hence, we add new edges, nodes, and features to the input graph. To preserve the information about the original topology of the input graph, we add positional encodings.
%
More formally, we design functions $g:\mathcal{G} \to \mathcal{G}$ that modify graphs and give features to the new nodes and edges. These functions can be prepended to any GNN $F_\theta: \mathcal{G} \to \mathbb{R}^n$ as $F_{\theta} \circ g : \mathcal{G} \rightarrow \mathbb{R}^n$.

The following are desiderata informing our design of $g$: (i) ability to capture the original graph, (ii) ability to incorporate long-range connections, (iii) computational efficiency, and (iv) minimal and flexible locality bias.
By using positional encodings and maintaining the original graph $G$ as a subgraph of the modified graph, we capture the original graph in our modified input (Section \ref{sec:properties}). By expanding the receptive field around each node to $r$-hop neighborhoods we reduce computational load relative to complete-graph attention, with limited inductive bias stemming from proximity. Additionally, expanded receptive fields alleviate under-reaching and over-squashing (Section \ref{sec:oversquashing}). 

\section{Approach}


We modify graphs before inputting them to a downstream GNN model, instead of modifying the model itself. Our approach does not remove edges or nodes in the original graph but only adds elements. 
Given input $G=(V,E,f_v,f_e)$, we create a new graph $G'=(V',E',f_v',f_e')$ such that $G$ is a subgraph of $G'$. Expanded receptive fields are achieved in $G'$ by adding edges decorated with positional encodings as node or edge attributes; we also add a fully-connected CLS node (``CLS'' is short for classification as per \cite{devlin2018bert}). $G'$ is still a graph with node and edge attributes to which we may apply any GNN. 
This process is represented by a function $g: \mathcal{G} \rightarrow \mathcal{G}$. We decompose the construction of $g$ into topological rewiring and positional encoding, 
detailed below.
In a slight abuse of notation, we will subsequently use $\mathcal G$ to denote only the subset of graphs relevant to a given machine learning problem.  For example, for graph regression on molecules, $\mathcal{G}$ denotes molecule graphs, with atoms as nodes and bonds as edges. 

\subsection{Topological Rewiring}
\label{sec:topologicalrewiring}

We modify the input graph $G$ to generate $G'$ in two steps:

\textbf{Expanded receptive field.}
Given a graph $G=(V,E,f_v,f_e) \in \mathcal{G}$ and a positive integer $r \in \mathbb{N}_{+}$, we add edges between all nodes within $r$ hops of each other in $G$ to create $G'_r = (V, E',f'_v, f'_e)$. If $G$ is annotated with edge features, we assign to each edge in $E'\backslash E$ an appropriate constant feature $C_e$. See Figure \ref{figure:rewire} for an illustration of the expanded receptive field around a node.





\begin{figure}[ht]
\centering
\includegraphics[width=0.75\linewidth]{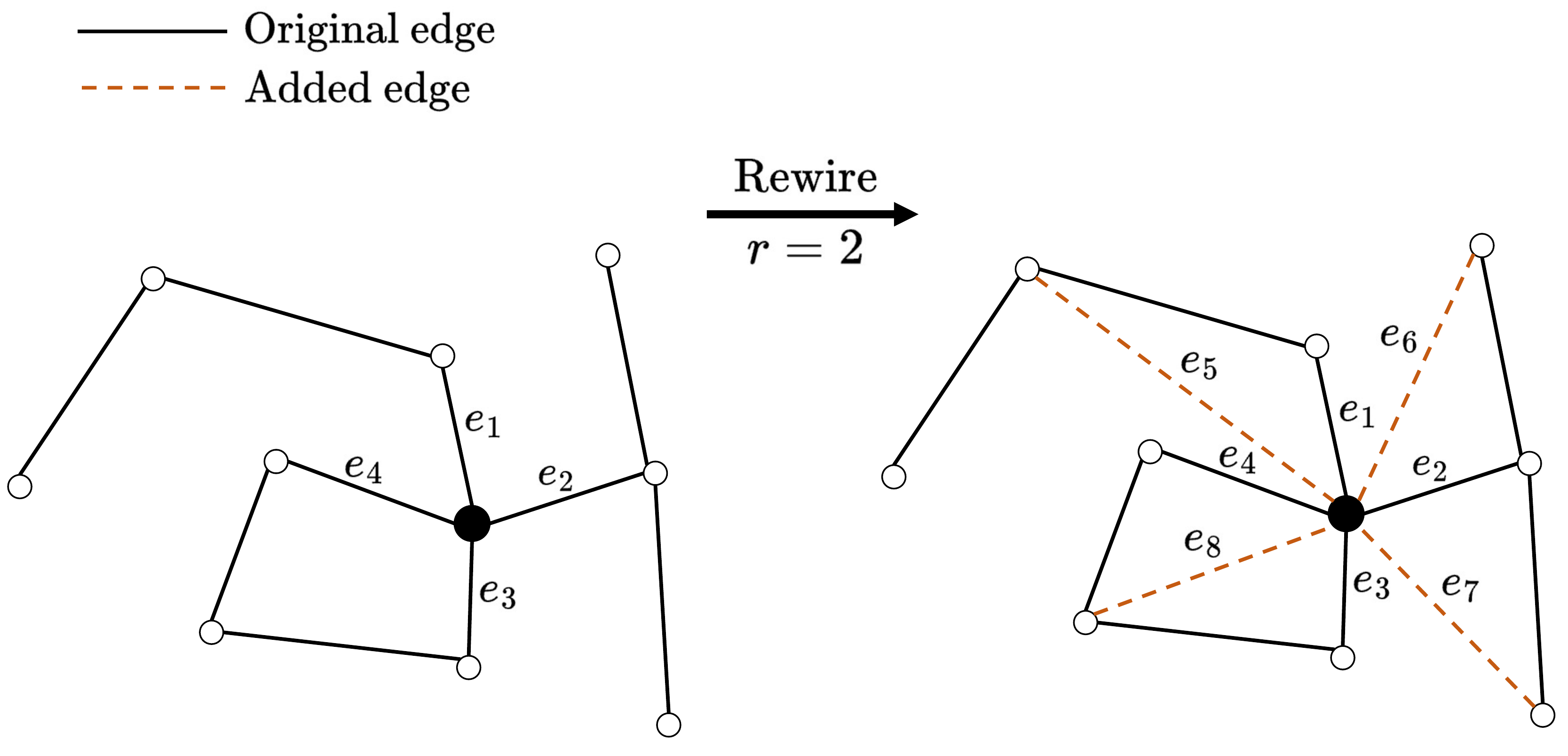}
\caption{Topological rewiring from the perspective of the black center node. \textit{Left}: Original graph $G$. \textit{Right}: Rewired graph $G'_r$ with $r=2$ and new edges $e_5, e_6, e_7,$ and $e_8$. Various subsequent positional encodings allow the NN (on input $G'_r$) to discern certain or all aspects of the original topology (i.e., $G$) while alleviating over-squashing and under-reaching.}
\label{figure:rewire}
\end{figure}

\textbf{CLS node.}
Following \citet{10.5555/3305381.3305512}, we also include a ``CLS''---or classification---node to our graph connected to all others.  
%
%
%
We follow this procedure: Given a graph $G$, we (i) initialize a new graph $G' =(V', E',f'_v, f'_e) = G$, (ii)
add a new node $v_{\text{CLS}}$ to $V'$, and (iii) set $f'_v(v_{\text{CLS}}):= C_v$ for a constant $C_v$. Finally, we set $E':= E \cup \bigcup_{v\in V} \{(v_{\text{CLS}}, v), (v, v_{\text{CLS}})\},$ with $f'_e((v_{\text{CLS}}, v)) = f'_e((v, v_{\text{CLS}})) := C_e$, where $C_e$ is defined above.


\subsection{Positional Encodings}


Given only the connectivity of a rewired graph $G'_r=(V',E',f'_v, f'_e)$ from the two-step procedure above, it may not be possible to recover the connectivity of the original graph $G=(V,E,f_v, f_e)$.
In the extreme, when $r$ is large and $G$ is connected, $G'_r$ could become fully-connected, meaning that all topology is lost---removing the central cue for graph-based learning. 
To combat this effect, we encode the original topology of $G$ into $G'_r$ via \emph{positional encodings}, which are node and/or edge features. We consider several positional encoding functions for edges $p_e: \mathcal{G} \times V' \times V' \rightarrow \mathbb{R}^n$ or nodes $p_v : \mathcal{G} \times V' \rightarrow \mathbb{R}^n$, appending the output of $p_e$ as edge or $p_v$ as node features to $G'_r$. 
Section \ref{sec:properties} lays out properties to compare choices of $p_e$ and/or $p_v$.  Then, Section \ref{sec:positionaloptions} provides concrete positional encodings compared in our experiments that trade off between the properties we lay out.

\subsubsection{Properties of Positional Encodings}\label{sec:properties}


There are countless ways to encode the subgraph topology of $G$ within $G'$ in vertex features $p_v$ or edge features $p_e$.  Below, we state a few properties we can check to give a framework for comparing the capabilities and biases of possible choices.

\textbf{Lossless encoding.}
While a GNN can ignore information in input $G'$, it cannot reconstruct information that has been lost in constructing $G'$ from $G$. Yet, there can be benefits in forgetting information, e.g., when dealing with noisy graphs or incorporating a stronger inductive bias \citep{DBLP:journals/corr/abs-2004-11198, 10.5555/3454287.3455484}. That said, a simple property to check for $G'$ equipped with positional encoding features $p_e,p_v$ is whether we can recover $G$ from this information, that is, whether our encoding is \emph{lossless} (or \emph{non-invasive}). As long as it is possible to identify $G$ within $g(G)$, $g$ is an injection and non-invasive. 
Hence, a sufficient condition for lossless positional encodings is as follows: If all edges in $G$ have positional-encoding values not shared with any edge in $G' \backslash G$, then $g:\mathcal G\to\mathcal G$ is a bijection.
One way to achieve this condition is to use an additional edge feature that is unique to the $1$-ring, i.e., unique to $G$.




\textbf{Discriminative power.} 
Following work investigating the discriminative power of GNNs \citep{xu2018how, NEURIPS2020_e4acb4c8}, \citet{microsoftgraphs} showed that expanded receptive fields together with shortest-path positional encodings are strictly more powerful than the 1-Weisfeiler-Lehman (WL) test and hence more powerful than $1$-hop vanilla spatial GNN models \citep{xu2018how}. The combination of increased receptive fields, positional encodings, and choice of subsequent GNN models determines discriminative power. In fact, it follows from \citet{microsoftgraphs} that the positional encodings presented below together with an increased receptive field $r > 1$ and a vanilla spatial GNN model are strictly more powerful than the 1-WL test.

\textbf{Computational time.} 
Positional encodings may come at substantial computational cost when working with $r$-hop neighborhoods.  The cost of computing positional encodings affects total inference time, which may be relevant in some learning settings. However, in our setting, the runtime of computing positional encodings is an order of magnitude less than the subsequent inference time, and in our implementation the asymptotic runtimes of computing the positional encodings are the same. See Appendix \ref{appendix:computationaltime}.

\textbf{Local vs.\ global.}
The positional encoding of a vertex or edge can be local, meaning it incorporates information from a limited-sized neighborhood in $G$, or global, in which case adding or removing a node anywhere in $G$ could affect all of the positional encodings.

\textbf{Inductive bias.}
Our positional encodings can bias the results of the learning procedure, effectively communicating to the downstream GNN which properties of $G$ and $G'$ are particularly important for learning.  Without positional encodings, our model would induce a bias stating that distances $< r$ in our graph are insignificant.  More subtly, suppose $\ell$ is the distance (of length $\leq r$) between two nodes in $G$ corresponding to a new edge in $E'$.  Using $\ell$ directly as positional encoding rather than a decaying function, e.g.\ $e^{-\alpha\ell}$, makes it easier or harder (resp.) to distinguish long distances in $G$.

A related consideration involves whether our model can imitate the inductive bias of past work.  For example, graph diffusion has been used to incorporate multi-hop connections into GNNs using fixed weights \citep{magna_ijcai21}.  We can ask whether our positional encodings on $G'$ are sufficient to learn to imitate the behavior of a prescribed multi-hop model on $G$, e.g.\ whether a layer of our GNN applied to $G'$ can capture multi-hop diffusion along $G$.

\textbf{Over-squashing and under-reaching.}
Section \ref{sec:oversquashing} demonstrates, via the NeighborsMatch problem \citep{alon2021on}, that increased receptive fields as well as the CLS-node alleviate over-squashing; however, this toy problem is concerned with matching node attributes and not with graph topology. We want positional encodings that alleviate over-squashing in the sense that it enables effective information propagation for the task at hand. Our experiments showing that expanded receptive fields alleviate the over-squashing problem and that the best performing positional encoding varies across datasets showcase this. Additionally, our experiments on the discriminative power of positional encodings in Appendix \ref{appendix:additionaleval} further help discern the different options.



\subsubsection{Positional Encoding Options}\label{sec:positionaloptions}

Taking the properties above into consideration, we now give a few options for positional encodings below, compared empirically in Section \ref{sec:experiments}.






\textbf{Shortest path.}
For any edge $e \in G'_r$, the \emph{shortest-path positional encoding} takes $p_e \in \{0,1,\dots, r \}$ to be the integer length of the shortest path in $G$ between the corresponding nodes of $E$. These embeddings are lossless because $G$ is the subgraph of $g(G)$ with $p_e = 1$. They also are free to compute given our construction of $G_r'$ from $G$.  But, multiple vertices in the $r$-neighborhood of a vertex in $V$ could have the same positional encoding in $V'$, and shortest path lengths are insufficient to capture complex inductive biases of multi-hop GNNs like diffusion over large neighborhoods. Shortest-path positional encoding was previously used by \citet{microsoftgraphs} for extending $G$ to a fully-connected graph, but they did not consider smaller $r$ values.




\textbf{Spectral embedding.}
Laplacian eigenvectors embed graph vertices into Euclidean space, providing per-vertex features that capture multi-scale graph structure. They are defined by factorizing the graph Laplacian matrix, $\Delta = I - D^{-1/2}AD^{-1/2}$,
where $D$ is the degree matrix and $A$ is the adjacency matrix.  We call the result a \emph{spectral positional embedding}. We can use the $q$ smallest non-trivial Laplacian eigenvectors of $G$ as a node-based positional encoding $p_v:V'\to\mathbb R^q$; i.e., $p_v$ is the spectral embedding of $G$ in $G'$. Note that $q$ represents an additional approximation as it truncates the spectrum.
Following \citet{benchmarking}, since these eigenvectors are known only up to a sign, we randomly flip the sign during training. Prior work consider Laplacian eigenvectors as additional node features without topological rewiring \citep{benchmarking}. 

Spectral positional encodings do not necessarily make $g$ injective.  Even when $q=|V|$, this encoding fails to distinguish isospectral graphs \citep{von1957spektren}, but these are rarely encountered in practice.  On the other hand, spectral signatures are common for graph matching and other tasks \citep{Hu_2014_CVPR}.  Moreover, unlike the remaining features in this section, spectral positional encodings capture global information about $G$ rather than only $r$-neighborhoods. 
Finally, we note that the diffusion equation for graphs can be written as $u_t=-\Delta u$, where $u$ is a function of vertex and time, and $u_t$ is the derivative of $u$ with respect to time; this graph PDE can be solved in closed-form given the eigenvectors and eigenvalues of $\Delta$.  Hence, given the spectral embedding of $G$ in $G'$, we can simulate diffusion-based multi-hop GNN architectures up to spectral truncation.

\textbf{Powers of the adjacency matrix.}
Our final option for positional encoding generalizes the shortest path encoding and can capture the inductive biases of  diffusion-based GNNs. The entry at position $(i,j)$ of the $k$-th power $A^k$ of the adjacency matrix $A$ of graph $G$ gives the number of paths of length $k$ between node $i$ and $j$ in $G$. Concatenating the powers from $k=1,\dots, r$, we get for each edge $e$ in $G'$ an integer vector $p_e \in \mathbb{N}^{r}$ giving the \emph{powers of the adjacency matrix positional encoding}. 

This embedding can be used to recover the shortest-path embedding.  
This adjacency-derived embedding can also generalize the inductive bias of diffusion-based multi-hops GNNs. 
In particular, diffusion aggregation weights are often approximated using a Taylor series,
$\mathcal{W} = \sum_{i=0}^{\infty} \theta_i A^i \approx \sum_{i=0}^{r} \theta_i A^i:= W$,
where $\theta_i$ are a prescribed decaying sequence ($\theta_i > \theta_{i+1}$). 
The entries of $W$ above can be computed linearly from the adjacency-powers positional encoding. Hence, assuming the underlying model is an universal function approximator, up to truncation of the series the adjacency-powers positional encoding is strictly more general than using prescribed diffusion-based aggregation weights on $G$.

\textbf{Lossless encodings.} The previously discussed lossless-encoding properties of our graph rewiring method are accomplished by two of the above-mentioned positional encodings:

\begin{proposition}
Shortest-path and adjacency matrix positional encodings yield lossless rewirings.
\end{proposition}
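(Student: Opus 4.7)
The plan is to invoke the sufficient condition stated just before the proposition: if every edge in $G$ carries a positional-encoding value that no edge in $G' \setminus G$ carries, then the map $g$ is injective, so $G$ can be recovered from $g(G) = G'$ and the rewiring is lossless. It then suffices to check this ``separating'' condition for each of the two encodings and to handle the CLS edges separately.

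First I would treat shortest-path encodings. By construction, any $e = (u,v) \in E$ satisfies $p_e(u,v) = 1$, since the shortest path from $u$ to $v$ in $G$ has length one. Conversely, any rewiring edge $e \in E' \setminus E$ connects two nodes whose distance in $G$ is at least $2$ (otherwise the edge was already in $E$), so $p_e \geq 2$. Hence the value $p_e = 1$ exactly picks out $E$ inside $E'$, and we can reconstruct $G$ by taking the subgraph of $G'_r$ on node set $V$ whose edges have $p_e = 1$.

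Next I would treat the powers-of-adjacency encoding $p_e = (A_{ij}, A^2_{ij}, \dots, A^r_{ij}) \in \mathbb{N}^r$. The first coordinate equals $A_{ij}$, which is $1$ when $(i,j) \in E$ and $0$ otherwise; thus the first coordinate alone separates $E$ from $E' \setminus E$, and we recover $G$ as the subgraph of $G'_r$ on $V$ consisting of edges whose first coordinate is $1$. In particular, since the shortest-path length equals the smallest $k$ with $A^k_{ij} > 0$, this encoding also determines the shortest-path encoding, as mentioned in the text.

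Finally I would note that CLS edges require a tiny separate argument: they are added with the constant feature $C_e$ and always incident to the distinguished node $v_{\mathrm{CLS}} \notin V$, so they cannot be confused with edges in $E$, and in any case $v_{\mathrm{CLS}}$ and its incident edges can be stripped off $G'$ before applying the reconstruction above. The main (minor) obstacle is bookkeeping around CLS edges and ensuring $v_{\mathrm{CLS}}$ itself is identifiable; both are handled by the constant node/edge features $C_v, C_e$ introduced in Section~\ref{sec:topologicalrewiring}. Putting these observations together yields that $g$ is injective for both encodings, which is the claim.
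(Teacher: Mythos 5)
Your argument is correct and matches the paper's own proof: both identify the original edge set as $E = \{e \in E' : p_e = 1\}$ for shortest paths and $E = \{e \in E' : (p_e)_1 = 1\}$ for adjacency powers. Your additional bookkeeping for the CLS node (stripping $v_{\mathrm{CLS}}$ and its incident edges via the constant features) is a reasonable extra detail the paper leaves implicit, but it does not change the approach.
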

\begin{proof}
 Recovering the original graph $G=(V,E)$ from the rewired graph $G'=(V,E')$ is almost trivial. With the shortest-path position encoding the original graph can be recovered via $E = \{ e | e \in E', p_e = 1 \}$ and for powers-of-the-adjacency-matrix encodings via $E = \{ e | e \in E', (p_e)_1 = 1 \}$. 
\end{proof} 



\section{Implementation Details}


Our method is compatible with most GNN architectures. Here we adopt GatedGCN \citep{gatedgcn}, MoNet \citep{8100059}, and an implementation of the Transformer  \citep{attentionisallyouneed}; see Appendix \ref{appendix:transformer} for details. For each model, we consider  graph rewiring with a different $r$-hop receptive field around each node, and compare with and without the CLS-node, as well as the three positional encodings introduced in Section \ref{sec:positionaloptions}.




\textbf{Input and readout layers.}
Typically, GNNs on a graph $G=(V,E,f_v,f_e)$ first embed node features $f_v$ and edge features $f_e$ through a small feed-forward  network (FFN) \emph{input layer}.
When incorporating positional encodings per edge/node, we embed using a small FFN and add them at this input layer. After this layer, it updates node and edge representations through successive applications of GNN layers. 
Lastly, a \emph{readout layer} is applied to the last GNN layer $L$. For node classification, it is typically a FFN applied to each node feature $h_i^{L}$. For graph classification, it is typically an FFN applied to the mean or sum aggregation of all node features $h^L$. For graph classification and when using the CLS-node, we aggregate by applying the FFN to the CLS-node's features in the last layer.

\section{Experiments}
\label{sec:experiments}


We evaluate performance on six benchmark graph datasets: ZINC, AQSOL, PATTERN, CLUSTER, MNIST, and CIFAR10 from \citep{benchmarking}. The benchmark includes a training time limit of 12 hours; we use similar compute to their work via a single TeslaV100 GPU. Training also stops if for a certain number of epochs the validation loss does not improve \citep{benchmarking}. Thus, our experiments consider the ease of training and efficient use of compute. For the first two datasets, we run GatedGCN, MoNet, and Transformer to show that rewiring and positional encoding work for different models; for the other datasets we run only GatedGCN to focus on the effects of receptive field size, the CLS node, and positional encodings. For all datasets, we run with increasing receptive fields, with different positional encodings, and with or without the CLS-node. 
In the tables, \textit{density} is the average of the densities (defined as the ratio $\nicefrac{|E|}{|V|^2}$) of each graph in the dataset rewired to the respective receptive field size. See Appendix \ref{appendix:training} for details.




Table \ref{table:results} compares our best results with other top performing methods and models. All our top performing models come from the GatedGCN, although the Transformer performs comparably; however, the Transformer was harder to train---see Appendix \ref{appendix:transformer}. MoNet performs worse but still sees significant improvements from our approach. Our GatedGCN implementation was taken from the same work \citep{benchmarking} that introduced the benchmarks and code that we use. Thus, hyperparameters might be better adapted to the GatedGCN. This highlights the benefits of our model-agnostic approach, which allows us to pick the best models from \citet{benchmarking} and combine them with our methods. Our approach with 100K parameters achieves top performance on all datasets among models with 100K parameters and even outperforms 500K-parameter models. 

\textbf{ZINC, Graph Regression.} ZINC consists of molecular graphs and the task is graph property regression for constrained solubility. Each ZINC molecule is represented as a graph of atoms with nodes and bonds as edges. In Table \ref{table:zinc} we present results for $r$ from $1$ to $10$. The \textit{density} column shows that these graphs are sparse and that the number of edges increases almost linearly as the receptive field $r$ is increased.  
Performance across all settings noticeably improves when increasing $r$ above $1$. Top performance is achieved with the CLS-node and powers-of-the-adjacency positional encoding at $r=4$, and at 52\% of the edges and compute compared to complete attention. When using the CLS node and/or spectral positional encodings, top performance generally occurs at lower $r$, which is likely due to the global nature of these changes to the graphs. The GatedGCN and Transformer perform comparably for the same settings, with a slight edge to the GatedGCN. The two models show the same performance trends between settings, i.e., both increased receptive fields and the CLS-node boost performance. Further, \citet{microsoftgraphs} include a performance of $0.123$ on ZINC with their Graphormer(500K), i.e., a Transformer with positional encodings and complete attention. However, their training is capped at 10,000 epochs while ours is capped at 1,000 epochs; training their Graphormer(500K) with same restrictions leads to a score of $0.26$ on ZINC. 

\textbf{AQSOL, Graph Regression.} AQSOL consists of the same types of molecular graphs as ZINC. The densities of AQSOL graphs are slightly higher than those of ZINC. For all settings not including CLS-node or spectral positional encodings, performance improves significantly when increasing $r$ above $1$ (see Table \ref{table:aqsol}); in these settings, better performing $r$ are larger than for ZINC. However, when including CLS node or spectral positional encodings, performance changes much less across different $r$. 
This indicates the importance of some form of global bias on this dataset. At least one of larger $r$ values, spectral positional encoding, or the CLS-token is required to provide the global bias, but the effect of them differs slightly across the two models.
GatedGCN performs significantly better, and larger $r$-values still boosts performance when combined with the CLS-token for MoNet, but not for GatedGCN. 
MoNet uses a Bayesian Gaussian Mixture Model \citep{GMM} and since MoNet was not constructed with edge-features in mind, we simply add edge embeddings to the attention coefficients. Not surprisingly, this points to the importance of including
edge features for optimal use of expanded receptive fields and positional encodings.

\textbf{CLUSTER, Node Classification.} CLUSTER is a node classification dataset generated using a stochastic block model (SBM). The task is to assign a cluster label to each node. There is a  total of 6 cluster labels and the average homophily is $0.34$. CLUSTER graphs do not have edge features. Table \ref{table:cluster} gives results for $r$-hop neighborhoods from $1$ to $3$. As can be seen in the density column, at $r=3$ all graphs are fully connected, and more than 99\% of them are fully connected at $r=2$. Hence, these graphs are dense. Significant improvements are achieved by increasing $r$ for all but the spectral positional encoding (again showcasing its global properties), which together with the CLS node perform competitively at $r=1$. The CLS node is helpful overall, especially at $r=1$. The GatedGCN and Transformer perform comparably for all but the spectral positional encodings where the Transformer breaks down. We found that this breakdown was due to the placement of batch normalization, discussed in Appendix \ref{appendix:transformerD}.


\textbf{PATTERN, Node Classification.} The PATTERN dataset is also generated using an SBM model and has an average homophily of $0.66$. The task is to classify the nodes into two communities and graphs have no edge features. Table \ref{table:pattern} shows results for $r$-hops from $1$ to $3$. Similarly to CLUSTER, the density column shows that the graphs are dense. Significant improvements are achieved by increasing $r > 1$ and/or using the CLS-node. Performance generally decreases at $r=3$. Similarly to CLUSTER, the CLS-node helps at $r=1$, but for both CLUSTER and PATTERN, top performing model comes from a larger $r > 1$ without the CLS-node, suggesting that trade-offs exist between CLS-node and increased receptive fields. Compared to CLUSTER, our approach shows a smaller performance boost for PATTERN, which suggests that our approach is more helpful for graphs with low homophily. We investigate this further in Appendix \ref{appendix:homophily}.

\textbf{MNIST, Graph Classification.} MNIST is an image classification dataset converted into super-pixel graphs, where each node’s feature includes super-pixel coordinates and intensity. The images are of handwritten digits, and the task is to classify the digit. Table \ref{table:mnist} summarizes results for $r$ from $1$ to $3$. Not all graphs are fully connected at $r=3$, but training at $r=4$ exceeds our memory limit. Noticeable performance gains are achieved at $r=2$, but performance generally decreases at $r=3$. The CLS-node consistently improves performance at $r=1$ but not otherwise, indicating that the CLS-node and increased $r$-size have subsumed effects.

\textbf{CIFAR10, Graph Classification.} CIFAR10 is an image classification dataset converted into super-pixel graphs, where each node’s features are the super-pixel coordinates and intensity. The images consist of ten natural motifs, and the task is to classify the motif, e.g., dog, ship, or airplane. Table \ref{table:cifar10} provides results for $r$ from $1$ to $3$. Not all graphs are fully connected at $r=3$, but training at $r=4$ led to out-of-memory issues. Top performing versions are all at $r=1$, and performance degrades for $r > 1$. As with MNIST, the CLS-node only improves performance at $r=1$, again indicating its shared (subsumed) effects with increased $r$-sizes.

\begin{table*}
\vspace{-0.4cm}
\caption{Benchmarking. Higher is better for all but for ZINC and AQSOL where lower is better. Benchmarks can be found in \citet{benchmarking, corso2020pna, gsn, dwivedi2021generalization}. 
The benchmarks \citep{benchmarking} and corresponding leaderboard have 100K and 500K parameter entries. OOM is short for out-of-memory errors.
}
\begin{center}
\begin{adjustbox}{max width=\textwidth}
\begin{tabular}{l l l l l l l l}
\hline
Datasets: & PATTERN	& CLUSTER & MNIST & CIFAR10 & ZINC & AQSOL \\ 
task: & node class. & node class. & graph class. & graph class. & graph reg. & graph reg. \\ 
\# graphs:  & 14000 & 12000 & 70000 & 60000 & 12000 & 9823 \\ 
Avg \# nodes: & 117.47 & 117.20 & 70.57 & 117.63 & 23.16 & 17.57 \\ 
Avg \# edges: & 4749.15 & 4301.72 & 564.53 & 941.07 & 
49.83 & 35.76 \\ 
\hline
MoNet(100K) & 85.482$\pm$0.037 & 58.064$\pm$0.131 & 90.805$\pm$0.032 & 54.655$\pm$0.518 & 0.397$\pm$0.010 & 1.395$\pm$0.027 \\ 
GAT(100K) & 75.824$\pm$1.823 & 57.732$\pm$0.323 & 95.535$\pm$0.205 & 64.223$\pm$0.455 & 0.475$\pm$0.007   & 1.441$\pm$0.023 \\ 
GraphSage(100K) & 50.516$\pm$0.001 & 50.454$\pm$0.145 & 97.312$\pm$0.097 & 65.767$\pm$0.308 &  0.468$\pm$0.003 &  1.431$\pm$0.010  \\
GIN(100K) & 85.590$\pm$0.011 & 58.384$\pm$0.236 & 96.485$\pm$0.252 & 55.255$\pm$1.527 & 0.387$\pm$0.015 &  1.894$\pm$0.024 \\
PNA(100K) & \textbf{86.730$\pm$0.050} & 63.020$\pm$0.262 & 97.940$\pm$0.120 & 70.350$\pm$0.630  & 0.188$\pm$0.004 & 1.083$\pm$0.011 \\ 
GatedGCN(100K) & 84.480$\pm$0.122 & 60.404$\pm$0.419 & 97.340$\pm$0.143 & 67.312$\pm$0.311 & 0.328$\pm$0.003 & 1.295$\pm$0.016 \\
GatedGCN-PE/E(500K) & 86.363$\pm$0.127 & 74.088$\pm$0.344 & OOM & OOM  & 0.214$\pm$0.006 & 0.996$\pm$0.008 \\ 
GraphTransformer(500K) & 54.941$\pm$3.739 & 27.121$\pm$8.471 & OOM & OOM &  0.598$\pm$0.049 & 1.110$\pm$0.010 \\
\hline
Ours(100K) & \textbf{86.757$\pm$0.031} & \textbf{77.575$\pm$0.149} & \textbf{98.743$\pm$0.062} & \textbf{73.808$\pm$0.193} & \textbf{0.143$\pm$0.006} & \textbf{0.920$\pm$0.009} \\
\hline
\end{tabular}
\end{adjustbox}
\end{center}
\label{table:results}
\vspace{-0.6cm}
\end{table*}

\begin{table*}[h]

\caption{
    Increasing $r$  on ZINC/molecules 100K parameters.
}
\begin{center}
\begin{adjustbox}{max width=\textwidth}
\begin{tabular}{l l l l l l l l l l l l l l}
\hline
type: & density & trans-adj & trans-adj-cls & trans-short & trans-short-cls & trans-lp & trans-lp-cls & gcn-adj & gcn-adj-cls & gcn-short & gcn-short-cls & gcn-lp & gcn-lp-cls  \\
r=1 & .14 & 0.341$\pm$0.024 & 0.289$\pm$0.012 & 0.346$\pm$0.022 & 0.298$\pm$0.012 & 0.293$\pm$0.044 & 0.257$\pm$0.036 & 0.329$\pm$0.023 & 0.287$\pm$0.010 & 0.326$\pm$0.024 & 0.265$\pm$0.043 & 0.291$\pm$0.029 & 0.274$\pm$0.027 \\
r=2 & .27 & 0.297$\pm$0.019 & 0.234$\pm$0.021 & 0.295$\pm$0.030 & 0.220$\pm$0.040 & \textbf{0.263$\pm$0.024} & 0.253$\pm$0.030 & 0.265$\pm$0.021 & 0.198$\pm$0.011 & 0.263$\pm$0.019 &  0.204$\pm$0.022 & \textbf{0.233$\pm$0.023} & \textbf{0.199$\pm$0.009} \\
r=3 & .40 & 0.233$\pm$0.010 & 0.150$\pm$0.003 & \textbf{0.287$\pm$0.024} & 0.197$\pm$0.014 & 0.297$\pm$0.018 & \textbf{0.243$\pm$0.019} & 0.199$\pm$0.007 & 0.152$\pm$0.007 & 0.243$\pm$0.005 & \textbf{0.153$\pm$0.005} & 0.254$\pm$0.006 & 0.214$\pm$0.007 \\
r=4 & .52 & 0.217$\pm$0.014 & \textbf{0.145$\pm$0.003} & 0.294$\pm$0.027 & \textbf{0.194$\pm$0.014} & 0.325$\pm$0.013 & 0.288$\pm$0.032 & 0.180$\pm$0.009 & \textbf{0.143$\pm$0.006}  & \textbf{0.236$\pm$0.008} & 0.167$\pm$0.010 & 0.305$\pm$0.010 & 0.307$\pm$0.028 \\
r=5 & .62 & 0.226$\pm$0.022 & 0.146$\pm$0.006 & 0.303$\pm$0.012 & 0.200$\pm$0.019 & 0.349$\pm$0.006 & 0.331$\pm$0.019 & \textbf{0.165$\pm$0.010} & 0.144$\pm$0.005 & 0.254$\pm$0.015 & 0.175$\pm$0.006 & 0.331$\pm$0.013 & 0.297$\pm$0.023 \\
r=6 & .71 & \textbf{0.206$\pm$0.005} & 0.169$\pm$0.010 & 0.305$\pm$0.014 & 0.209$\pm$0.016 & 0.373$\pm$0.012 & 0.343$\pm$0.009 & 0.171$\pm$0.007 & 0.152$\pm$0.007 & 0.255$\pm$0.009 & 0.185$\pm$0.009 & 0.352$\pm$0.005 & 0.337$\pm$0.009 \\
r=7 & .79 & 0.206$\pm$0.013& 0.165$\pm$0.008 & 0.318$\pm$0.012 & 0.211$\pm$0.017 & 0.371$\pm$0.017 & 0.336$\pm$0.003 & 0.172$\pm$0.007 & 0.152$\pm$0.004 & 0.259$\pm$0.013 & 0.197$\pm$0.004 & 0.351$\pm$0.005 & 0.327$\pm$0.012 \\
r=8 & .85 & 0.212$\pm$0.012 & 0.180$\pm$0.010 & 0.341$\pm$0.035 & 0.235$\pm$0.031 & 0.369$\pm$0.009 & 0.338$\pm$0.009 & 0.192$\pm$0.008 & 0.182$\pm$0.012 & 0.276$\pm$0.019 & 0.210$\pm$0.025 & 0.345$\pm$0.006 & 0.330$\pm$0.006 \\
r=9 & .90 & 0.216$\pm$0.007 & 0.203$\pm$0.022 & 0.385$\pm$0.013 & 0.225$\pm$0.009 & 0.396$\pm$0.008 & 0.342$\pm$0.007 & 0.214$\pm$0.012 & 0.257$\pm$0.017 & 0.280$\pm$0.020 & 0.205$\pm$0.011 & 0.363$\pm$0.017 & 0.332$\pm$0.011 \\
r=10 & .94 & 0.247$\pm$0.021 & 0.238$\pm$0.014 & 0.366$\pm$0.027 & 0.245$\pm$0.023 & 0.398$\pm$0.009 & 0.350$\pm$0.011 & 0.270$\pm$0.045 & 0.304$\pm$0.032 & 0.275$\pm$0.008 & 0.206$\pm$0.010 & 0.370$\pm$0.013 & 0.336$\pm$0.003 \\
\end{tabular}
\end{adjustbox}
\end{center}
\label{table:zinc}
\centering
{\scriptsize gcn: GatedGCN, trans: Transformer, adj: adjacency p.e., short: shortest-path p.e., lp: spectral p.e., cls: CLS-node}
\vspace{-0.6cm}
\end{table*}

\begin{table*}[h!]
\caption{Increasing $r$  on AQSOL 100K parameters.}
\begin{center}
\begin{adjustbox}{max width=\textwidth}
\begin{tabular}{l l l l l l l l l l l l l l}
\hline
type: & density & gcn-adj & gcn-adj-cls & gcn-short & gcn-short-cls & gcn-lp & gcn-lp-cls & mon-adj & mon-adj-cls & mon-short & mon-short-cls & mon-lp & mon-lp-cls \\
r=1 & .17 & 1.277$\pm$0.039 & \textbf{0.920$\pm$0.009} & 1.287$\pm$0.017 & \textbf{0.927$\pm$0.019} & 1.027$\pm$0.006 & 0.936$\pm$0.004 & 1.391$\pm$0.019 & 1.261$\pm$0.117 & 1.402$\pm$0.013 & 1.216$\pm$0.139 & \textbf{1.136$\pm$0.020} & 1.234$\pm$0.028 \\
r=2 & .37 & 1.268$\pm$0.011 & 0.956$\pm$0.019 & 1.273$\pm$0.019 & 0.947$\pm$0.016 & 1.049$\pm$0.016 & 0.961$\pm$0.027 & 1.357$\pm$0.020 & 1.205$\pm$0.049 & 1.376$\pm$0.032 & 1.145$\pm$0.055 & 1.193$\pm$0.021 & 1.269$\pm$0.155 \\
r=3 & .54 & 1.164$\pm$0.006 & 0.954$\pm$0.013 & 1.200$\pm$0.013 & 0.961$\pm$0.017 & 1.045$\pm$0.010 & 0.953$\pm$0.020 & 1.250$\pm$0.009 & 1.277$\pm$0.088 & 1.269$\pm$0.017 & 1.215$\pm$0.070 & 1.160$\pm$0.023 & \textbf{1.183$\pm$0.017} \\
r=4 & .67 & 1.118$\pm$0.008 & 0.943$\pm$0.017 & 1.132$\pm$0.012 & 0.951$\pm$0.008 & 1.056$\pm$0.007 & 0.937$\pm$0.019 & 1.240$\pm$0.018 & 1.183$\pm$0.039 & 1.188$\pm$0.027 & 1.158$\pm$0.036 & 1.199$\pm$0.021 & 1.225$\pm$0.056 \\
r=5 & .76 & 1.076$\pm$0.015 & 0.970$\pm$0.011 & 1.090$\pm$0.019 & 0.981$\pm$0.012 & 1.046$\pm$0.022 & 0.962$\pm$0.012 & 1.243$\pm$0.040 & \textbf{1.166$\pm$0.026} & \textbf{1.179$\pm$0.030} & 1.183$\pm$0.071 & 1.211$\pm$0.005 & 1.203$\pm$0.026 \\
r=6 & .82 & 1.056$\pm$0.021 & 0.941$\pm$0.020 & 1.064$\pm$0.017 & 0.945$\pm$0.012 & 1.054$\pm$0.018 & 0.933$\pm$0.008 & 1.229$\pm$0.031 & 1.195$\pm$0.041& 1.206$\pm$0.013 & 1.151$\pm$0.044 & 1.194$\pm$0.014 & 1.217$\pm$0.038 \\
r=7 & .87 & 1.064$\pm$0.014 & 0.967$\pm$0.018 & 1.043$\pm$0.012 & 0.949$\pm$0.006 & \textbf{1.026$\pm$0.017} & \textbf{0.930$\pm$0.004} & 1.235$\pm$0.056 & 1.186$\pm$0.031 & 1.197$\pm$0.024 & \textbf{1.141$\pm$0.035} & 1.208$\pm$0.028 & 1.211$\pm$0.022 \\
r=8 & .90 & \textbf{1.054$\pm$0.014} & 0.956$\pm$0.017 & 1.057$\pm$0.008 & 0.952$\pm$0.009 & 1.035$\pm$0.009 & 0.944$\pm$0.014 & \textbf{1.213$\pm$0.025} & 1.179$\pm$0.043 & 1.212$\pm$0.022 & 1.144$\pm$0.016 & 1.193$\pm$0.013 & 1.182$\pm$0.019 \\
r=9 & .92 & 1.090$\pm$0.009 & 0.996$\pm$0.009 & 1.042$\pm$0.008 & 0.955$\pm$0.010 & 1.038$\pm$0.011 & 0.952$\pm$0.010 & 1.248$\pm$0.044 & 1.205$\pm$0.055 & 1.205$\pm$0.017 & 1.160$\pm$0.050 & 1.184$\pm$0.014 & 1.208$\pm$0.008 \\
r=10 & .93 & 1.092$\pm$0.010 & 0.966$\pm$0.008 & \textbf{1.035$\pm$0.009} & 0.953$\pm$0.018 & 1.037$\pm$0.011 & 0.951$\pm$0.010 & 1.228$\pm$0.011 & 1.264$\pm$0.067 & 1.164$\pm$0.037 & 1.161$\pm$0.038 & 1.195$\pm$0.024 & 1.192$\pm$0.015 \\
\end{tabular}
\end{adjustbox}
\end{center}
\label{table:aqsol}
\centering
{\scriptsize gcn: GatedGCN, mon: MoNet, adj: adjacency p.e, short: shortest-path p.e., lp: spectral p.e., cls: CLS-node}
\vspace{-0.5cm}
\end{table*}

\begin{table*}[h!]
\caption{Increasing $r$  on CLUSTER 100K parameters.}
\begin{center}
\begin{adjustbox}{max width=\textwidth}
\begin{tabular}{l l l l l l l l l l l l l l}
\hline
type: & density & trans-adj & trans-adj-cls & trans-short & trans-short-cls & trans-lp* & trans-lp-cls* & gcn-adj & gcn-adj-cls & gcn-short & gcn-short-cls & gcn-lp & gcn-lp-cls \\
r=1 & .31 & 73.124$\pm$0.264 & 73.972$\pm$0.123 & 73.346$\pm$0.119 & 74.117$\pm$0.363 & \textbf{53.858$\pm$7.832} & 48.950$\pm$6.887 & 72.492$\pm$0.460 & 73.459$\pm$0.197 & 72.554$\pm$0.418 & 73.048$\pm$0.220 & 76.453$\pm$0.105 & 77.156$\pm$0.181 \\
r=2 & $>$.99 & 76.964$\pm$0.059 & 77.193$\pm$0.072 & \textbf{76.498$\pm$0.216} & \textbf{76.432$\pm$0.115} & 47.140$\pm$11.138 & 53.381$\pm$4.887 & \textbf{76.917$\pm$0.059} & \textbf{76.874$\pm$0.172} & \textbf{75.354$\pm$0.115} & \textbf{75.411$\pm$0.063} & 77.445$\pm$0.153 & 77.520$\pm$0.176 \\
r=3 & 1.0 & \textbf{77.095$\pm$0.250} & \textbf{77.266$\pm$0.133} & 76.364$\pm$0.085 & 76.636$\pm$0.049 & 37.274$\pm$14.859 & \textbf{54.194$\pm$1.746} & 61.028$\pm$2.334 & 61.540$\pm$2.404 & 75.255$\pm$0.199 & 75.392$\pm$0.190 & \textbf{77.575$\pm$0.149} & \textbf{77.560$\pm$0.195} \\
\end{tabular}
\end{adjustbox}
\end{center}
\label{table:cluster}
\centering
{\scriptsize gcn: GatedGCN, trans: Transformer, adj: adjacency p.e., short: shortest-path p.e., lp: spectral p.e., cls: CLS-node, *: Training not converging}
\vspace{-0.5cm}
\end{table*}

\begin{table*}[h!]
\newcommand{\abcdresults}[8]{ #1 & #2 & #3 & #4 & #5 & #6 & #7 & #8 \\}
\caption{Increasing $r$  on PATTERN 100K parameters.}
\begin{center}
\begin{adjustbox}{max width=\textwidth}
\begin{tabular}{l l l l l l l l l}
\hline
  \abcdresults{type:}{density}{gcn-adj}{gcn-adj-cls}{gcn-lp}{gcn-lp-cls}{gcn-short}{gcn-short-cls}
  \abcdresults{r=1}{.43}{85.715$\pm$0.036}{\textbf{86.723$\pm$0.006}}{86.547$\pm$0.026}{86.713$\pm$0.031}{85.681$\pm$0.033}{86.732$\pm$0.020}
  \abcdresults{r=2}{$>$.99}{\textbf{86.698$\pm$0.047}}{86.707$\pm$0.029}{\textbf{86.723$\pm$0.031}}{\textbf{86.747$\pm$0.011}}{\textbf{86.757$\pm$0.031}}{86.736$\pm$0.014}
  \abcdresults{r=3}{1.0}{85.471$\pm$0.949}{84.657$\pm$0.977}{86.718$\pm$0.024}{86.744$\pm$0.015}{86.712$\pm$0.031}{\textbf{86.739$\pm$0.027}}
\end{tabular}
\end{adjustbox}
\end{center}
\label{table:pattern}
\centering
{\scriptsize gcn: GatedGCN, adj: adjacency p.e, short: shortest-path p.e., lp: spectral p.e., cls: CLS-node}
\vspace{-0.5cm}
\end{table*}


\begin{table*}[h!]
\newcommand{\abcdresults}[8]{ #1 & #2 & #3 & #4 & #5 & #6 & #7 & #8 \\}
\caption{Increasing $r$  on MNIST 100K parameters.}
\begin{center}
\begin{adjustbox}{max width=\textwidth}
\begin{tabular}{l l l l l l l l l}
\hline
  \abcdresults{type:}{density}{gcn-adj}{gcn-adj-cls}{gcn-lp}{gcn-lp-cls}{gcn-short}{gcn-short-cls}
  \abcdresults{r=1}{.13}{98.537$\pm$0.089}{98.522$\pm$0.033}{98.395$\pm$0.099}{98.542$\pm$0.079}{98.373$\pm$0.126}{98.545$\pm$0.057}
  \abcdresults{r=2}{.34}{\textbf{98.630$\pm$0.134}}{\textbf{98.743$\pm$0.062}}{\textbf{98.720$\pm$0.067}}{\textbf{98.605$\pm$0.032}}{\textbf{98.597$\pm$0.070}}{\textbf{98.552$\pm$0.107}}
  \abcdresults{r=3}{.58}{98.035$\pm$0.094}{98.190$\pm$0.141}{98.513$\pm$0.145}{98.570$\pm$0.117}{98.315$\pm$0.156}{98.390$\pm$0.104}
\end{tabular}
\end{adjustbox}
\end{center}
\label{table:mnist}
\centering
{\scriptsize gcn: GatedGCN, adj: adjacency p.e., short: shortest-path p.e., lp: spectral p.e., cls: CLS-node}
\vspace{-0.5cm}
\end{table*}

\begin{table*}[h!]
\newcommand{\abcdresults}[8]{ #1 & #2 & #3 & #4 & #5 & #6 & #7 & #8 \\}
\caption{Increasing $r$  on CIFAR10 100K parameters.}
\begin{center}
\begin{adjustbox}{max width=\textwidth}
\begin{tabular}{l l l l l l l l l}
\hline
  \abcdresults{type:}{density}{gcn-adj}{gcn-adj-cls}{gcn-lp}{gcn-lp-cls}{gcn-short}{gcn-short-cls}
  \abcdresults{r=1}{.08}{\textbf{73.415$\pm$0.717}}{\textbf{73.498$\pm$0.842}}{\textbf{72.525$\pm$0.471}}{\textbf{73.808$\pm$0.193}}{\textbf{72.610$\pm$0.574}}{\textbf{72.950$\pm$0.520}}
  \abcdresults{r=2}{.21}{72.037$\pm$0.400}{72.480$\pm$0.420}{72.085$\pm$0.487}{71.745$\pm$0.325}{72.127$\pm$0.471}{71.470$\pm$0.508}
  \abcdresults{r=3}{.38}{70.688$\pm$0.171}{69.580$\pm$0.488}{70.380$\pm$0.308}{70.318$\pm$0.295}{71.285$\pm$0.722}{71.188$\pm$0.498}
\end{tabular}
\end{adjustbox}
\end{center}
\label{table:cifar10}
\centering
{\scriptsize gcn: GatedGCN, adj: adjacency p.e., short: shortest-path p.e., lp: spectral p.e., cls: CLS-node}
\vspace{-0.5cm}
\end{table*}

\subsection{NeighborsMatch, Over-squashing}
\label{sec:oversquashing}

\citet{alon2021on} introduce a toy problem called \emph{NeighborsMatch} to benchmark the extent of over-squashing in GNNs, while controlling over-squashing by limiting the problem radius $r_p$. The graphs in the dataset are binary trees of depth equal to the problem radius $r_p$. Thus, the graphs are structured and sparse, and the number of edges grows linearly with the increased receptive field $r$.
See Figure \ref{fig:neighborplot}, Appendix \ref{appendix:oversquashing}, for results with GatedGCN. Increasing the receptive field $r$ with a step of $1$ increases the attainable problem radius with a step of $1$, while using the CLS-node at $r=1$ falls in between the performance of $r=2$ and $r=3$ but with a much longer tail. Thus, this further showcases the subsumed as well as different effect (complementary and conflicting) the receptive field and the CLS-node have, as also observed on the other benchmarks.

\subsection{Computational Analysis}

For all positional encodings, the number of edges determines the asymptotic runtime and memory use. The CLS-node only introduces an additive factor. Figures \ref{fig:comp_time1} and \ref{fig:comp_time2} in Appendix \ref{appendix:computationaltime} show that the runtime in practice scales roughly the same as the density, as the receptive field size is increased; real runtime has a significant constant factor. 

\subsection{Selecting Positional Encoding and Hops Size}

We recommend the adjacency positional encodings together with the CLS-node. In terms of ranked performance across the 6 datasets, adjacency and spectral positional encodings perform the same, but the spectral encoding performs considerably worse on the ZINC dataset, while the differences are smaller on the other datasets. Additional experiments in Appendix \ref{appendix:additionaleval}, Figure \ref{fig:random_erdos}, assess the discriminative power of the different encodings. However, there is no positional encoding superior in all aspects. Instead, each one has unique benefits as well as drawbacks. This is made apparent by considering $r$ as a parameter and observing the performance differences across values of $r$. Furthermore, the CLS-node is part of the best-performing configuration more often than not. Similarly, no fixed $r$ is optimal for all datasets. Instead, optimal $r$ depends on the dataset and the amount of compute, and our experiments showcase the compute-performance trade-off.
Appendix \ref{appendix:homophily} shows that increased $r$ diminishes the reliance on homophily as an inductive bias, and thus low homophily of a dataset could be used as an indicator for selecting an increased $r$. If the density does not change much from a change in $r$ then neither does performance. The use of the spectral positional encodings, the CLS-node, or increased $r$ have subsuming effects for multiple datasets; here the CLS-node or spectral positional encodings may be preferred, computationally cheaper, alternatives to increasing $r$.

From this empirical study, for picking optimal $r$, we recommend computing the densities for increasing $r$ and picking the first one where the average density exceeds 0.5 to reap most of the performance boosts. This seems to maintain a helpful locality bias as well as to significantly reduce the compute compared to complete attention. See Appendix \ref{appendix:optimal} for further discussion.



\section{Discussion}

Our simple graph rewiring and positional encodings achieve competitive performance, widening receptive fields while alleviating over-squashing.
This is much due to the ability to easily apply our method to models that stem from a large body of work on GNNs, 
highlighting the benefits of our model-agnostic approach.

The reality is that attention with complete receptive fields is still computationally intractable for most practitioners and researchers. However, here we show the compute-performance trade-offs of increased receptive fields and that significant performance boosts can be obtained by increasing the receptive field only slightly. Thus, opening up recent work to a broader range of practitioners as well as giving more fair conditions for comparing GNNs.
In addition, the systematic investigation of increased receptive fields and positional encodings gives further insights into the necessity of homophily for the success of GNNs and highlights other implicit biases in GNN architectures.




\section*{Acknowledgements}

The MIT Geometric Data Processing group acknowledges the generous support of Army Research Office grants W911NF2010168 and W911NF2110293, of Air Force Office of Scientific Research award FA9550-19-1-031, of National Science Foundation grant CHS-1955697, from the CSAIL Systems that Learn program, from the MIT–IBM Watson AI Laboratory, from the Toyota–CSAIL Joint Research Center, from a gift from Adobe Systems, and from a Google Research Scholar award.




\bibliographystyle{plainnat}
\bibliography{ref}

\begin{thebibliography}{61}
\providecommand{\natexlab}[1]{#1}
\providecommand{\url}[1]{\texttt{#1}}
\expandafter\ifx\csname urlstyle\endcsname\relax
  \providecommand{\doi}[1]{doi: #1}\else
  \providecommand{\doi}{doi: \begingroup \urlstyle{rm}\Url}\fi

\bibitem[Abu-El-Haija et~al.(2019)Abu-El-Haija, Perozzi, Kapoor, Alipourfard,
  Lerman, Harutyunyan, Steeg, and Galstyan]{abuelhaija2019mixhop}
Sami Abu-El-Haija, Bryan Perozzi, Amol Kapoor, Nazanin Alipourfard, Kristina
  Lerman, Hrayr Harutyunyan, Greg~Ver Steeg, and Aram Galstyan.
\newblock Mixhop: Higher-order graph convolutional architectures via sparsified
  neighborhood mixing, 2019.

\bibitem[Alon and Yahav(2021)]{alon2021on}
Uri Alon and Eran Yahav.
\newblock On the bottleneck of graph neural networks and its practical
  implications.
\newblock In \emph{International Conference on Learning Representations}, 2021.

\bibitem[Baek et~al.(2021)Baek, Kang, and Hwang]{baek2021accurate}
Jinheon Baek, Minki Kang, and Sung~Ju Hwang.
\newblock Accurate learning of graph representations with graph multiset
  pooling, 2021.

\bibitem[Barceló et~al.(2020)Barceló, Kostylev, Monet, Pérez, Reutter, and
  Silva]{underreaching}
Pablo Barceló, Egor~V. Kostylev, Mikael Monet, Jorge Pérez, Juan Reutter, and
  Juan~Pablo Silva.
\newblock The logical expressiveness of graph neural networks.
\newblock In \emph{International Conference on Learning Representations}, 2020.

\bibitem[Bouritsas et~al.(2020)Bouritsas, Frasca, Zafeiriou, and
  Bronstein]{gsn}
Giorgos Bouritsas, Fabrizio Frasca, Stefanos Zafeiriou, and Michael~M.
  Bronstein.
\newblock Improving graph neural network expressivity via subgraph isomorphism
  counting.
\newblock \emph{CoRR}, abs/2006.09252, 2020.

\bibitem[Bresson and Laurent(2018)]{gatedgcn}
Xavier Bresson and Thomas Laurent.
\newblock Residual gated graph convnets, 2018.

\bibitem[Br\"{u}el~Gabrielsson(2020)]{NEURIPS2020_e4acb4c8}
Rickard Br\"{u}el~Gabrielsson.
\newblock Universal function approximation on graphs.
\newblock In H.~Larochelle, M.~Ranzato, R.~Hadsell, M.~F. Balcan, and H.~Lin,
  editors, \emph{Advances in Neural Information Processing Systems}, volume~33,
  pages 19762--19772. Curran Associates, Inc., 2020.

\bibitem[Bruna et~al.(2014)Bruna, Zaremba, Szlam, and
  LeCun]{DBLP:journals/corr/BrunaZSL13}
Joan Bruna, Wojciech Zaremba, Arthur Szlam, and Yann LeCun.
\newblock Spectral networks and locally connected networks on graphs.
\newblock In Yoshua Bengio and Yann LeCun, editors, \emph{2nd International
  Conference on Learning Representations, {ICLR} 2014, Banff, AB, Canada, April
  14-16, 2014, Conference Track Proceedings}, 2014.

\bibitem[Cai and Lam(2019)]{DBLP:journals/corr/abs-1911-07470}
Deng Cai and Wai Lam.
\newblock Graph transformer for graph-to-sequence learning.
\newblock \emph{CoRR}, abs/1911.07470, 2019.

\bibitem[Chen et~al.(2019)Chen, Barzilay, and Jaakkola]{chen2019pathaugmented}
Benson Chen, Regina Barzilay, and Tommi Jaakkola.
\newblock Path-augmented graph transformer network, 2019.

\bibitem[Chen et~al.(2021)Chen, Chen, and Bruna]{chen2021on}
Lei Chen, Zhengdao Chen, and Joan Bruna.
\newblock On graph neural networks versus graph-augmented {\{}mlp{\}}s.
\newblock In \emph{International Conference on Learning Representations}, 2021.
\newblock URL \url{https://openreview.net/forum?id=tiqI7w64JG2}.

\bibitem[Corso et~al.(2020)Corso, Cavalleri, Beaini, Li\`{o}, and
  Veli\v{c}kovi\'{c}]{corso2020pna}
Gabriele Corso, Luca Cavalleri, Dominique Beaini, Pietro Li\`{o}, and Petar
  Veli\v{c}kovi\'{c}.
\newblock Principal neighbourhood aggregation for graph nets.
\newblock In \emph{Advances in Neural Information Processing Systems}, 2020.

\bibitem[Dai et~al.(2019)Dai, Yang, Yang, Carbonell, Le, and
  Salakhutdinov]{transformerXL}
Zihang Dai, Zhilin Yang, Yiming Yang, Jaime~G. Carbonell, Quoc~V. Le, and
  Ruslan Salakhutdinov.
\newblock Transformer-xl: Attentive language models beyond a fixed-length
  context.
\newblock \emph{CoRR}, abs/1901.02860, 2019.

\bibitem[Dempster et~al.(1977)Dempster, Laird, and Rubin]{GMM}
A.~P. Dempster, N.~M. Laird, and D.~B. Rubin.
\newblock Maximum likelihood from incomplete data via the em algorithm.
\newblock \emph{Journal of the Royal Statistical Society. Series B
  (Methodological)}, 39\penalty0 (1):\penalty0 1--38, 1977.
\newblock ISSN 00359246.
\newblock URL \url{http://www.jstor.org/stable/2984875}.

\bibitem[Devlin et~al.(2018)Devlin, Chang, Lee, and Toutanova]{devlin2018bert}
Jacob Devlin, Ming-Wei Chang, Kenton Lee, and Kristina Toutanova.
\newblock Bert: Pre-training of deep bidirectional transformers for language
  understanding.
\newblock \emph{arXiv preprint arXiv:1810.04805}, 2018.

\bibitem[Dwivedi and Bresson(2021)]{dwivedi2021generalization}
Vijay~Prakash Dwivedi and Xavier Bresson.
\newblock A generalization of transformer networks to graphs, 2021.

\bibitem[Dwivedi et~al.(2020)Dwivedi, Joshi, Laurent, Bengio, and
  Bresson]{benchmarking}
Vijay~Prakash Dwivedi, Chaitanya~K. Joshi, Thomas Laurent, Yoshua Bengio, and
  Xavier Bresson.
\newblock Benchmarking graph neural networks, 2020.

\bibitem[Feng et~al.(2023)Feng, Chen, Li, Sarkar, and Zhang]{feng2023powerful}
Jiarui Feng, Yixin Chen, Fuhai Li, Anindya Sarkar, and Muhan Zhang.
\newblock How powerful are k-hop message passing graph neural networks, 2023.

\bibitem[Frankel et~al.(2021)Frankel, Safta, Alleman, and
  Jones]{frankel2021meshbased}
Ari Frankel, Cosmin Safta, Coleman Alleman, and Reese Jones.
\newblock Mesh-based graph convolutional neural networks for modeling materials
  with microstructure, 2021.

\bibitem[Gao and Ji(2019)]{pmlr-v97-gao19a}
Hongyang Gao and Shuiwang Ji.
\newblock Graph u-nets.
\newblock In Kamalika Chaudhuri and Ruslan Salakhutdinov, editors,
  \emph{Proceedings of the 36th International Conference on Machine Learning},
  volume~97 of \emph{Proceedings of Machine Learning Research}, pages
  2083--2092. PMLR, 09--15 Jun 2019.
\newblock URL \url{https://proceedings.mlr.press/v97/gao19a.html}.

\bibitem[Gilmer et~al.(2017)Gilmer, Schoenholz, Riley, Vinyals, and
  Dahl]{10.5555/3305381.3305512}
Justin Gilmer, Samuel~S. Schoenholz, Patrick~F. Riley, Oriol Vinyals, and
  George~E. Dahl.
\newblock Neural message passing for quantum chemistry.
\newblock In \emph{Proceedings of the 34th International Conference on Machine
  Learning - Volume 70}, ICML'17, page 1263–1272. JMLR.org, 2017.

\bibitem[Hamilton et~al.(2017)Hamilton, Ying, and Leskovec]{NIPS2017_5dd9db5e}
Will Hamilton, Zhitao Ying, and Jure Leskovec.
\newblock Inductive representation learning on large graphs.
\newblock In I.~Guyon, U.~V. Luxburg, S.~Bengio, H.~Wallach, R.~Fergus,
  S.~Vishwanathan, and R.~Garnett, editors, \emph{Advances in Neural
  Information Processing Systems}, volume~30. Curran Associates, Inc., 2017.

\bibitem[Hu et~al.(2014)Hu, Rustamov, and Guibas]{Hu_2014_CVPR}
Nan Hu, Raif~M. Rustamov, and Leonidas Guibas.
\newblock Stable and informative spectral signatures for graph matching.
\newblock In \emph{Proceedings of the IEEE Conference on Computer Vision and
  Pattern Recognition (CVPR)}, June 2014.

\bibitem[Hu et~al.(2020)Hu, Dong, Wang, and
  Sun]{DBLP:journals/corr/abs-2003-01332}
Ziniu Hu, Yuxiao Dong, Kuansan Wang, and Yizhou Sun.
\newblock Heterogeneous graph transformer.
\newblock \emph{CoRR}, abs/2003.01332, 2020.

\bibitem[Isufi et~al.(2020)Isufi, Gama, and Ribeiro]{edgenets}
Elvin Isufi, Fernando Gama, and Alejandro Ribeiro.
\newblock Edgenets: Edge varying graph neural networks.
\newblock \emph{CoRR}, abs/2001.07620, 2020.

\bibitem[Ke et~al.(2020)Ke, He, and Liu]{rethinkingPos}
Guolin Ke, Di~He, and Tie{-}Yan Liu.
\newblock Rethinking positional encoding in language pre-training.
\newblock \emph{CoRR}, abs/2006.15595, 2020.

\bibitem[Kim et~al.(2022)Kim, Nguyen, Min, Cho, Lee, Lee, and
  Hong]{kim2022pure}
Jinwoo Kim, Tien~Dat Nguyen, Seonwoo Min, Sungjun Cho, Moontae Lee, Honglak
  Lee, and Seunghoon Hong.
\newblock Pure transformers are powerful graph learners, 2022.

\bibitem[Kingma and Ba(2015)]{adam}
Diederik~P. Kingma and Jimmy Ba.
\newblock Adam: {A} method for stochastic optimization.
\newblock In Yoshua Bengio and Yann LeCun, editors, \emph{3rd International
  Conference on Learning Representations, {ICLR} 2015, San Diego, CA, USA, May
  7-9, 2015, Conference Track Proceedings}, 2015.

\bibitem[Kipf and Welling(2017)]{DBLP:conf/iclr/KipfW17}
Thomas~N. Kipf and Max Welling.
\newblock Semi-supervised classification with graph convolutional networks.
\newblock In \emph{5th International Conference on Learning Representations,
  {ICLR} 2017, Toulon, France, April 24-26, 2017, Conference Track
  Proceedings}. OpenReview.net, 2017.

\bibitem[Klicpera et~al.(2019)Klicpera, Wei\ss{}enberger, and
  G\"{u}nnemann]{10.5555/3454287.3455484}
Johannes Klicpera, Stefan Wei\ss{}enberger, and Stephan G\"{u}nnemann.
\newblock \emph{Diffusion Improves Graph Learning}.
\newblock Curran Associates Inc., Red Hook, NY, USA, 2019.

\bibitem[Kreuzer et~al.(2021)Kreuzer, Beaini, Hamilton, L{\'{e}}tourneau, and
  Tossou]{DBLP:journals/corr/abs-2106-03893}
Devin Kreuzer, Dominique Beaini, William~L. Hamilton, Vincent L{\'{e}}tourneau,
  and Prudencio Tossou.
\newblock Rethinking graph transformers with spectral attention.
\newblock \emph{CoRR}, abs/2106.03893, 2021.

\bibitem[Li et~al.(2020)Li, Wang, Wang, and Leskovec]{li2020distance}
Pan Li, Yanbang Wang, Hongwei Wang, and Jure Leskovec.
\newblock Distance encoding: Design provably more powerful neural networks for
  graph representation learning, 2020.

\bibitem[Luan et~al.(2019)Luan, Zhao, Chang, and Precup]{NEURIPS2019_ccdf3864}
Sitao Luan, Mingde Zhao, Xiao-Wen Chang, and Doina Precup.
\newblock Break the ceiling: Stronger multi-scale deep graph convolutional
  networks.
\newblock In H.~Wallach, H.~Larochelle, A.~Beygelzimer, F.~d\textquotesingle
  Alch\'{e}-Buc, E.~Fox, and R.~Garnett, editors, \emph{Advances in Neural
  Information Processing Systems}, volume~32. Curran Associates, Inc., 2019.
\newblock URL
  \url{https://proceedings.neurips.cc/paper/2019/file/ccdf3864e2fa9089f9eca4fc7a48ea0a-Paper.pdf}.

\bibitem[Ma et~al.(2021)Ma, Liu, Shah, and
  Tang]{https://doi.org/10.48550/arxiv.2106.06134}
Yao Ma, Xiaorui Liu, Neil Shah, and Jiliang Tang.
\newblock Is homophily a necessity for graph neural networks?, 2021.
\newblock URL \url{https://arxiv.org/abs/2106.06134}.

\bibitem[Ma et~al.(2022)Ma, Liu, Shah, and Tang]{Homophily}
Yao Ma, Xiaorui Liu, Neil Shah, and Jiliang Tang.
\newblock Is homophily a necessity for graph neural networks?
\newblock In \emph{International Conference on Learning Representations}, 2022.
\newblock URL \url{https://openreview.net/forum?id=ucASPPD9GKN}.

\bibitem[Monti et~al.(2017)Monti, Boscaini, Masci, Rodola, Svoboda, and
  Bronstein]{8100059}
F.~Monti, D.~Boscaini, J.~Masci, E.~Rodola, J.~Svoboda, and M.~M. Bronstein.
\newblock Geometric deep learning on graphs and manifolds using mixture model
  cnns.
\newblock In \emph{2017 IEEE Conference on Computer Vision and Pattern
  Recognition (CVPR)}, pages 5425--5434, Los Alamitos, CA, USA, jul 2017. IEEE
  Computer Society.
\newblock \doi{10.1109/CVPR.2017.576}.
\newblock URL \url{https://doi.ieeecomputersociety.org/10.1109/CVPR.2017.576}.

\bibitem[Nguyen et~al.(2023)Nguyen, Hieu, Nguyen, Ho, Osher, and
  Nguyen]{pmlr-v202-nguyen23c}
Khang Nguyen, Nong~Minh Hieu, Vinh~Duc Nguyen, Nhat Ho, Stanley Osher, and
  Tan~Minh Nguyen.
\newblock Revisiting over-smoothing and over-squashing using ollivier-ricci
  curvature.
\newblock In Andreas Krause, Emma Brunskill, Kyunghyun Cho, Barbara Engelhardt,
  Sivan Sabato, and Jonathan Scarlett, editors, \emph{Proceedings of the 40th
  International Conference on Machine Learning}, volume 202 of
  \emph{Proceedings of Machine Learning Research}, pages 25956--25979. PMLR,
  23--29 Jul 2023.
\newblock URL \url{https://proceedings.mlr.press/v202/nguyen23c.html}.

\bibitem[Nikolentzos et~al.(2020)Nikolentzos, Dasoulas, and
  Vazirgiannis]{nikolentzos2020khop}
Giannis Nikolentzos, George Dasoulas, and Michalis Vazirgiannis.
\newblock k-hop graph neural networks, 2020.

\bibitem[Parmar et~al.(2018)Parmar, Vaswani, Uszkoreit, Kaiser, Shazeer, Ku,
  and Tran]{pmlr-v80-parmar18a}
Niki Parmar, Ashish Vaswani, Jakob Uszkoreit, Lukasz Kaiser, Noam Shazeer,
  Alexander Ku, and Dustin Tran.
\newblock Image transformer.
\newblock In Jennifer Dy and Andreas Krause, editors, \emph{Proceedings of the
  35th International Conference on Machine Learning}, volume~80 of
  \emph{Proceedings of Machine Learning Research}, pages 4055--4064. PMLR,
  10--15 Jul 2018.

\bibitem[Ramp\'{a}\v{s}ek et~al.(2022)Ramp\'{a}\v{s}ek, Galkin, Dwivedi, Luu,
  Wolf, and Beaini]{NEURIPS2022_5d4834a1}
Ladislav Ramp\'{a}\v{s}ek, Michael Galkin, Vijay~Prakash Dwivedi, Anh~Tuan Luu,
  Guy Wolf, and Dominique Beaini.
\newblock Recipe for a general, powerful, scalable graph transformer.
\newblock In S.~Koyejo, S.~Mohamed, A.~Agarwal, D.~Belgrave, K.~Cho, and A.~Oh,
  editors, \emph{Advances in Neural Information Processing Systems}, volume~35,
  pages 14501--14515. Curran Associates, Inc., 2022.
\newblock URL
  \url{https://proceedings.neurips.cc/paper_files/paper/2022/file/5d4834a159f1547b267a05a4e2b7cf5e-Paper-Conference.pdf}.

\bibitem[Rong et~al.(2020)Rong, Bian, Xu, Xie, WEI, Huang, and
  Huang]{NEURIPS2020_94aef384}
Yu~Rong, Yatao Bian, Tingyang Xu, Weiyang Xie, Ying WEI, Wenbing Huang, and
  Junzhou Huang.
\newblock Self-supervised graph transformer on large-scale molecular data.
\newblock In H.~Larochelle, M.~Ranzato, R.~Hadsell, M.~F. Balcan, and H.~Lin,
  editors, \emph{Advances in Neural Information Processing Systems}, volume~33,
  pages 12559--12571. Curran Associates, Inc., 2020.

\bibitem[Rossi et~al.(2020)Rossi, Frasca, Chamberlain, Eynard, Bronstein, and
  Monti]{DBLP:journals/corr/abs-2004-11198}
Emanuele Rossi, Fabrizio Frasca, Ben Chamberlain, Davide Eynard, Michael~M.
  Bronstein, and Federico Monti.
\newblock Sign: Scalable inception graph neural networks.
\newblock \emph{CoRR}, abs/2004.11198, 2020.

\bibitem[Shi et~al.(2021)Shi, Huang, Feng, Zhong, Wang, and Sun]{shi2021masked}
Yunsheng Shi, Zhengjie Huang, Shikun Feng, Hui Zhong, Wenjin Wang, and Yu~Sun.
\newblock Masked label prediction: Unified message passing model for
  semi-supervised classification, 2021.

\bibitem[Shirzad et~al.(2023)Shirzad, Velingker, Venkatachalam, Sutherland, and
  Sinop]{shirzad2023exphormer}
Hamed Shirzad, Ameya Velingker, Balaji Venkatachalam, Danica~J. Sutherland, and
  Ali~Kemal Sinop.
\newblock Exphormer: Sparse transformers for graphs, 2023.

\bibitem[Sun et~al.(2021)Sun, Gu, and Hu]{sun2021scalable}
Chuxiong Sun, Hongming Gu, and Jie Hu.
\newblock Scalable and adaptive graph neural networks with self-label-enhanced
  training, 2021.

\bibitem[Topping et~al.(2021)Topping, Di~Giovanni, Chamberlain, Dong, and
  Bronstein]{topping2021understanding}
Jake Topping, Francesco Di~Giovanni, Benjamin~Paul Chamberlain, Xiaowen Dong,
  and Michael~M Bronstein.
\newblock Understanding over-squashing and bottlenecks on graphs via curvature.
\newblock \emph{arXiv preprint arXiv:2111.14522}, 2021.

\bibitem[Vaswani et~al.(2017)Vaswani, Shazeer, Parmar, Uszkoreit, Jones, Gomez,
  Kaiser, and Polosukhin]{attentionisallyouneed}
Ashish Vaswani, Noam Shazeer, Niki Parmar, Jakob Uszkoreit, Llion Jones,
  Aidan~N Gomez, \L~ukasz Kaiser, and Illia Polosukhin.
\newblock Attention is all you need.
\newblock In I.~Guyon, U.~V. Luxburg, S.~Bengio, H.~Wallach, R.~Fergus,
  S.~Vishwanathan, and R.~Garnett, editors, \emph{Advances in Neural
  Information Processing Systems}, volume~30. Curran Associates, Inc., 2017.

\bibitem[Veličković(2022)]{https://doi.org/10.48550/arxiv.2202.11097}
Petar Veličković.
\newblock Message passing all the way up, 2022.
\newblock URL \url{https://arxiv.org/abs/2202.11097}.

\bibitem[Veličković et~al.(2018)Veličković, Cucurull, Casanova, Romero,
  Liò, and Bengio]{veli2018graph}
Petar Veličković, Guillem Cucurull, Arantxa Casanova, Adriana Romero, Pietro
  Liò, and Yoshua Bengio.
\newblock Graph attention networks, 2018.

\bibitem[Von~Collatz and Sinogowitz(1957)]{von1957spektren}
Lothar Von~Collatz and Ulrich Sinogowitz.
\newblock Spektren endlicher grafen.
\newblock In \emph{Abhandlungen aus dem Mathematischen Seminar der
  Universit{\"a}t Hamburg}, volume~21, pages 63--77. Springer, 1957.

\bibitem[Wang et~al.(2021{\natexlab{a}})Wang, Ying, Huang, and
  Leskovec]{magna_ijcai21}
Guangtao Wang, Rex Ying, Jing Huang, and Jure Leskovec.
\newblock Multi-hop attention graph neural networks.
\newblock In \emph{Proceedings of the Thirtieth International Joint Conference
  on Artificial Intelligence (IJCAI)}, pages 3089--3096, 2021{\natexlab{a}}.

\bibitem[Wang et~al.(2021{\natexlab{b}})Wang, Ying, Huang, and
  Leskovec]{wang2021multihop}
Guangtao Wang, Rex Ying, Jing Huang, and Jure Leskovec.
\newblock Multi-hop attention graph neural network, 2021{\natexlab{b}}.

\bibitem[Xu et~al.(2019)Xu, Hu, Leskovec, and Jegelka]{xu2018how}
Keyulu Xu, Weihua Hu, Jure Leskovec, and Stefanie Jegelka.
\newblock How powerful are graph neural networks?
\newblock In \emph{International Conference on Learning Representations}, 2019.

\bibitem[Yang et~al.(2019)Yang, Wang, Song, Yuan, and Tao]{ijcai2019-0569}
Yiding Yang, Xinchao Wang, Mingli Song, Junsong Yuan, and Dacheng Tao.
\newblock Spagan: Shortest path graph attention network.
\newblock In \emph{Proceedings of the Twenty-Eighth International Joint
  Conference on Artificial Intelligence, {IJCAI-19}}, pages 4099--4105.
  International Joint Conferences on Artificial Intelligence Organization, 7
  2019.
\newblock \doi{10.24963/ijcai.2019/569}.

\bibitem[Ying et~al.(2021)Ying, Cai, Luo, Zheng, Ke, He, Shen, and
  Liu]{microsoftgraphs}
Chengxuan Ying, Tianle Cai, Shengjie Luo, Shuxin Zheng, Guolin Ke, Di~He,
  Yanming Shen, and Tie{-}Yan Liu.
\newblock Do transformers really perform bad for graph representation?
\newblock \emph{CoRR}, abs/2106.05234, 2021.

\bibitem[Ying et~al.(2018)Ying, You, Morris, Ren, Hamilton, and
  Leskovec]{NEURIPS2018_e77dbaf6}
Zhitao Ying, Jiaxuan You, Christopher Morris, Xiang Ren, Will Hamilton, and
  Jure Leskovec.
\newblock Hierarchical graph representation learning with differentiable
  pooling.
\newblock In S.~Bengio, H.~Wallach, H.~Larochelle, K.~Grauman, N.~Cesa-Bianchi,
  and R.~Garnett, editors, \emph{Advances in Neural Information Processing
  Systems}, volume~31. Curran Associates, Inc., 2018.
\newblock URL
  \url{https://proceedings.neurips.cc/paper_files/paper/2018/file/e77dbaf6759253c7c6d0efc5690369c7-Paper.pdf}.

\bibitem[You et~al.(2019)You, Ying, and Leskovec]{you2019positionaware}
Jiaxuan You, Rex Ying, and Jure Leskovec.
\newblock Position-aware graph neural networks, 2019.

\bibitem[Yun et~al.(2019)Yun, Jeong, Kim, Kang, and Kim]{NEURIPS2019_9d63484a}
Seongjun Yun, Minbyul Jeong, Raehyun Kim, Jaewoo Kang, and Hyunwoo~J Kim.
\newblock Graph transformer networks.
\newblock In H.~Wallach, H.~Larochelle, A.~Beygelzimer, F.~d\textquotesingle
  Alch\'{e}-Buc, E.~Fox, and R.~Garnett, editors, \emph{Advances in Neural
  Information Processing Systems}, volume~32. Curran Associates, Inc., 2019.

\bibitem[Zaheer et~al.(2020)Zaheer, Guruganesh, Dubey, Ainslie, Alberti,
  Ontanon, Pham, Ravula, Wang, Yang, and Ahmed]{NEURIPS2020_c8512d14}
Manzil Zaheer, Guru Guruganesh, Kumar~Avinava Dubey, Joshua Ainslie, Chris
  Alberti, Santiago Ontanon, Philip Pham, Anirudh Ravula, Qifan Wang, Li~Yang,
  and Amr Ahmed.
\newblock Big bird: Transformers for longer sequences.
\newblock In H.~Larochelle, M.~Ranzato, R.~Hadsell, M.~F. Balcan, and H.~Lin,
  editors, \emph{Advances in Neural Information Processing Systems}, volume~33,
  pages 17283--17297. Curran Associates, Inc., 2020.

\bibitem[Zhang et~al.(2020)Zhang, Zhang, Xia, and Sun]{zhang2020graphbert}
Jiawei Zhang, Haopeng Zhang, Congying Xia, and Li~Sun.
\newblock Graph-bert: Only attention is needed for learning graph
  representations, 2020.

\bibitem[Zhang et~al.(2018)Zhang, Pal, Coates, and Üstebay]{zhang2018bayesian}
Yingxue Zhang, Soumyasundar Pal, Mark Coates, and Deniz Üstebay.
\newblock Bayesian graph convolutional neural networks for semi-supervised
  classification, 2018.

\end{thebibliography}
\newpage
\appendix
\onecolumn
\section{Training Details}
\label{appendix:training}

Both code and training follow \citet{benchmarking} closely, and to a lesser extent \citep{dwivedi2021generalization}, which uses the same code base.

Like \citep{benchmarking}, we use the Adam optimizer \citep{adam} with the same learning rate decay strategy. The initial learning rate is set to $10^{-3}$ and is reduced by half if the validation loss does not improve after a fixed ("lr\_schedule\_patience") number of epochs, either 5 or 10. Instead of setting a maximum number of epochs, the training is stopped either when the learning rate has reached $10^{-6}$ or when the computational time reaches 12 hours (6 hours for NeighborsMatch). Experiments are run with 4 different seeds; we report summary statistics from the 4 results.

Below we include training settings for the different datasets.

\subsection{ZINC}

\begin{verbatim}
"model": GatedGCN and Transformer,
"batch_size": 128,
"lr_schedule_patience": 10,
"max_time": 12
\end{verbatim}

\subsection{AQSOL}

\begin{verbatim}
"model": GatedGCN and MoNet,
"batch_size": 128,
"lr_schedule_patience": 10,
"max_time": 12
\end{verbatim}

\subsection{CLUSTER}

\begin{verbatim}
"model": GatedGCN and Transformer,
"batch_size": 48 (GatedGCN), 32 or 16 (Transformer),
"lr_schedule_patience": 5,
"max_time": 12
\end{verbatim}

\subsection{PATTERN}

\begin{verbatim}
"model": GatedGCN,
"batch_size": 48,
"lr_schedule_patience": 5,
"max_time": 12
\end{verbatim}

\subsection{MNIST}

\begin{verbatim}
"model": GatedGCN,
"batch_size": 128,
"lr_schedule_patience": 10,
"max_time": 12
\end{verbatim}

\subsection{CIFAR10}

\begin{verbatim}
"model": GatedGCN,
"batch_size": 128,
"lr_schedule_patience": 10,
"max_time": 12
\end{verbatim}

\subsection{NeighborsMatch}

\begin{verbatim}
"model": GatedGCN,
"batch_size": 256,
"lr_schedule_patience": 10,
"max_time": 6
\end{verbatim}

\section{Transformer Implementation}
\label{appendix:transformer}

We implemented a simple version of the Transformer adapted to graphs:
\begin{align*}
    \hat{h}^{l}_i =& \text{BN}(h^{l-1}_i) \\
    \hat{\hat{h}}^{l}_i =&  \mathbin\Vert_{k=1}^{H} \big( \sum_{j \in \mathcal{N}_i \cup \{ i\}}a_{i,j}^{l, k} W^l_k \hat{h}_j^{l-1} \big) + h_i^{l-1}  \\
    h^{l}_i =& \text{FFN}( \text{BN}( \hat{\hat{h}}^{l}_i) ) + \hat{\hat{h}}^{l}_i 
\end{align*}
with
\begin{align*} 
    \hat{e}_{i,j}^{l} =& \text{BN}(e_{i,j}^{l-1}) \\
    \hat{a}_{i,j}^{l,k} =& ((A^{l}_k\hat{h}_i^{l})^T(B^{l}_k\hat{h}_j^{l}) + C^{l}_k\hat{e}_{i,j}^{l})/d  \\
    a_{i,j}^{l,k} =& \frac{\exp(\hat{a}_{i,j}^{l,k})}{\sum_{j' \in \mathcal{N}_i \cup \{i\}} \exp(\hat{a}_{i,j'}^{l,k}) } \\
    e_{i,j}^{l} =& \text{FFN}(\hat{e}_{i,j}^{l}) +  e_{i,j}^{l-1} \\
\end{align*}
Here, $h$ and $e$ are node and edge features (resp.) from the previous layer. $W_k, A, B \in \mathbb{R}^{d/H \times d}$ and $C \in \mathbb{R}^{1 \times d}$ are learnable weight-matrices, $H$ is the number of attention heads, and BN is short for batch normalization.  $\mathbin\Vert_{k=1}^{H}$ denotes the concatenation of the attention heads. 


\subsection{Design Choices and Challenges}
\label{appendix:transformerD}

There are many variations on the Transformer model. Following \citet{microsoftgraphs}, we put the normalization before the multi-head attention, which caused instability when training on CLUSTER with Laplacian (spectral) positional encodings. This was fixed by putting the normalization after or using layer normalization instead of batch normalization; however, these changes reduced performance on ZINC. While the GatedGCN worked well with identical architecture parameters across datasets, we found that the Transformer needed more variations to stay competitive on MNIST and CIFAR10; in particular, fewer layers and larger hidden dimensions. 

Transformers use multi-head attention which puts number-of-heads dimension vectors on each edge---seen as directed. Hence, the memory load becomes $2 \times |E| \times \text{num\_heads}$ (in our experiments, $\text{num\_heads} = 6$), which compared for GatedGCN is only $2\times |E|$. This causes a memory bottleneck for the Transformer that may force one to use a reduced batch size to avoid memory issues. 

\subsection{Other Variants}

We implemented other variants, including more involved Transformers. As in \citep{attentionisallyouneed}, we ran the path-integers through sine and cosine functions of different frequencies, and inspired by \citep{transformerXL, rethinkingPos} we implemented a more involved incorporation of relative positions in the multi-head attention (see below); however, we found performance to be comparable. 

In natural language processing, the input is a sequence (a line graph) $ x = (x_1, \dots, x_n)$ of text tokens from a vocabulary set $\mathcal{V}$, with each token having a one-hot-encoding $f_{\mathcal{V}}: \mathcal{V} \rightarrow [0,1]^{|\mathcal{V}|}$. The word embeddings $E \in \mathbb{R}^{n \times d}$ for $n$ tokens are formed as $E = (W_{embed}f_{\mathcal{V}}(x_i) \ | \ x_i \in x )$ where $W_{embed} \in \mathbb{R}^{d \times |\mathcal{V}| }$ is a learnable weight matrix. 

The original Transformer model used \emph{absolute positional encodings}. This means that we add the positional encoding to the node embedding at the input layer. Consider a positional encoding function $p_e: \mathbb{N}_0 \rightarrow \mathbb{R}^d$. Then the first input is 
$$h^0 = (W_{embed}f_{\mathcal{V}}(x_i)+p_e(i) \ | \ i = 1, \dots, n ) = E + U$$
where $U = (p_e(i) \ | \ i = 0, \dots n ) \in \mathbb{R}^{n \times d}$. Typically $p_e$ contains sine and cosine functions of different frequencies:
\begin{align*}
    p_e(k, 2\times l) &= \sin(k/10000^{(2\times l)/d}) \\
    p_e(k, 2\times l + 1) &= \cos(k/10000^{(2\times l + 1)/d})
\end{align*}
where $k \in \mathbb{N}$ is the position and $l \in \mathbb{N}$ is the dimension. That is, each dimension of the positional encoding corresponds to a sinusoid. The wavelengths form a geometric progression from $2\pi$ to $10000 \times 2\pi$. This function was chosen because it was hypothesized that it would allow the model to easily learn to attend by relative positions, since for any fixed offset $m$, $p_e(k+m)$ is a linear function of $p_e(k)$. It was also hypothesized that it may allow the model to extrapolate to sequence lengths longer than the ones encountered during training.

In many cases, absolute positional encodings have been replaced with \textit{relative fully learnable positional encodings} and \textit{relative partially learnable positional encodings} \citep{transformerXL}. To justify these, consider the first attention layer with absolute positional encodings:
\begin{align*}
    A^{abs}_{i,j} = E_{x_i}W_qW_k^TE_{x_j}^T + E_{x_i}W_qW_k^TU_j^T 
    + U_iW_qW_k^TE_{x_j}^T + U_iW_qW_k^TU_j^T
\end{align*}
For relative (fully and partially) learnable positional encodings we have instead:
\begin{align*}
    A^{rel}_{i,j} = E_{x_i}W_qW_{k,E}^TE_{x_j}^T + E_{x_i}W_qW_{k,R}^TR_{i-j}^T 
    + uW_{k,E}^TE_{x_j}^T + vW_{k,R}^TR_{i-j}^T
\end{align*}
where $u,v \in \mathbb{R}^{1 \times d}$ are learnable weights and $R_{i-j} \in \mathbb{R}^{1 \times d}$ is a relative positional encoding between $i$ and $j$. Each term has the following intuitive meaning: term (1) represents content-based addressing, term (2) captures a content-dependent positional bias, term (3) governs a global content bias, and (4) encodes a global positional bias.

For relative fully learnable positional encodings, $W_{k,R}^TR_{i-j}^T$ is a learnable weight in $\mathbb{R}^{d \times 1}$ for each $i-j \in \mathbb{N}$, while for relative partially learnable positional encodings $R_{i,j} = p_e(|i-j|)$ where $p_e$ is the sinusoidal function from before.

We implemented both fully and partially learnable positional encodings for the shortest-path positional encodings (integer-valued) and related versions for the other positional encodings (in $\mathbb{R}^d$). We include results in Tables \ref{table:zincT} and \ref{table:clusterT}.

\begin{table*}
\caption{Increasing $r$  on ZINC/molecules 100K parameters.}
\begin{center}
\begin{adjustbox}{max width=\textwidth}
\begin{tabular}{l l l l l}
\hline
type: & trans-adj & trans-short-cls  \\
r=1 & 0.338$\pm$0.020 & 0.274$\pm$0.021  \\
r=2 & 0.296$\pm$0.010 & \textbf{0.179$\pm$0.011} \\
r=3 & 0.260$\pm$0.013 & 0.183$\pm$0.018 \\
r=4 & 0.255$\pm$0.009 & 0.271$\pm$0.036 \\
r=5 & 0.235$\pm$0.022 & 0.227$\pm$0.026 \\
r=6 & 0.226$\pm$0.015 & 0.264$\pm$0.042  \\
r=7 & 0.219$\pm$0.012 & 0.251$\pm$0.039  \\
r=8 & \textbf{0.210$\pm$0.009} & 0.278$\pm$0.026  \\
r=9 & 0.213$\pm$0.010 & 0.289$\pm$0.042  \\
r=10 & 0.564$\pm$0.221 & 0.327$\pm$0.023 \\
\end{tabular}
\end{adjustbox}
\end{center}
\label{table:zincT}
\end{table*}

\begin{table*}
\caption{Increasing $r$ on CLUSTER 100K parameters.}
\begin{center}
\begin{adjustbox}{max width=\textwidth}
\begin{tabular}{l l l l l l l l l l l}
\hline
type: & trans-adj-cls & trans-adj & trans-short-cls & trans-short \\
r=1 & 74.262$\pm$0.188 & 73.445$\pm$0.068 & 74.717$\pm$0.308 &  72.947$\pm$0.123 \\
r=2 & \textbf{77.390$\pm$0.168} & \textbf{77.399$\pm$0.200} & \textbf{76.771$\pm$0.012} & 76.454$\pm$0.084  \\
r=3 & 77.216$\pm$0.226 & 68.384$\pm$7.975 & 76.770$\pm$0.094 & \textbf{76.521$\pm$0.250} \\
\end{tabular}
\end{adjustbox}
\end{center}
\label{table:clusterT}
\end{table*}








\section{Over-squashing}
\label{appendix:oversquashing}

Results for over-squashing experiment can be found in Figure \ref{fig:neighborplot}.

\begin{figure}
\centering
\begin{tikzpicture}
\begin{axis}[
    xlabel={$r_p$ (problem radius)},
    ylabel={Accuracy},
    xmin=0, xmax=9.5,
    ymin=0, ymax=1.05,
    xtick={1,2,3,4,5,6,7,8,9,10}, 
    ytick={0,0.1,0.2,0.3,0.4,0.5,0.6,0.7,0.8,0.9,1.0},
    legend pos=south west,
    ymajorgrids=true,
    grid style=dashed,
]
\addplot[
    color=blue,
    mark=square,
    ]
    coordinates {
    (1,1)(2,1)(3,1)(4,0.08)(5,0.03)(6,0.03)(7,0.03)(8,0.02)(9,0.01)
    };
    \addlegendentry{$r=1$}
\addplot[
    color=red,
    mark=triangle,
    ]
    coordinates {
    (1,1)(2,1)(3,1)(4,1.0)(5,0.20)(6,0.26)(7,0.03)(8,0.02)(9,0.01)
    };
    \addlegendentry{$r=2$}
\addplot[
    color=green,
    mark=otimes,
    ]
    coordinates {
    (1,1)(2,1)(3,1)(4,1)(5,1.0)(6,0.55)(7,0.22)(8,0.02)(9,0.01)
    };
    \addlegendentry{$r=3$}
\addplot[
    color=pink,
    mark=square,
    ]
    coordinates {
    (1,1)(2,1)(3,1)(4,1)(5,1.0)(6,1.0)(7,0.73)(8,0.43)(9,0.02)
    };
    \addlegendentry{$r=4$}
\addplot[
    color=orange,
    mark=diamond,
    ]
    coordinates {
    (1,1)(2,1)(3,1)(4,1)(5,0.79)(6,0.91)(7,0.68)(8,0.62)(9,0.47)
    };
    \addlegendentry{$r=1$,CLS}

\end{axis}
\end{tikzpicture}
\caption{NeighborsMatch \citep{alon2021on}. Benchmarking the extent of over-squashing via the problem radius $r_p$.}
\label{fig:neighborplot}
\end{figure}
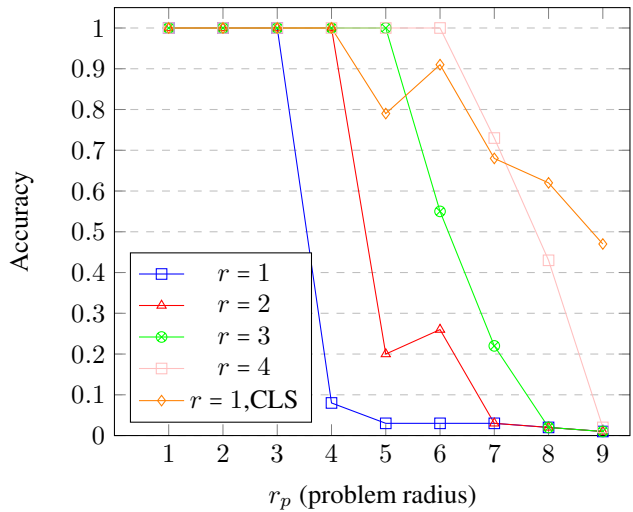

\section{Additional Evaluation of Positional Encodings}
\label{appendix:additionaleval}

Here we provide a start to toy data and a task for comparing positional encodings. In this task we wish to assess how powerful the positional encodings are in practice, i.e., how well they discriminate between different graph isomorphism classes. Specifically, we generate 100 random Erdos graphs and then expand the receptive field so that the graph is fully connected. Thus, the positional encodings become the mere instrument for communicating the connectivity/topology of the original graph. The task is to retrieve a specific graph among all 100 graphs, i.e., the task is graph classification and there is a 100 classes. Hence, achieving 100\% accuracy means that the GNN, based on the positional encodings, has been able to discriminate between all graphs. We only look at train accuracy here, since we're interested in the power to overfit, not to generalize. Results can be found in Figure \ref{fig:random_erdos}.

All positional encodings are able to solve that task after a sufficient amount of training, besides Adj-10. Adj-5 and Adj-10 encode the adjacency matrix to the power of 5 and 10 respectively (at both points all graphs are fully connected). Adj-10 encodes between any two nodes the number of paths of length 10, number of path of length 9, and so on. The experiments indicate that too much such information confuses the GNN and makes it harder to discriminate between graphs. The shortest and Adj-5 positional encodings are the fastest at solving the task. This can be due to the fact that the Laplacian positional encoding is only unique up to a sign and that we randomly switch the sign during training. 
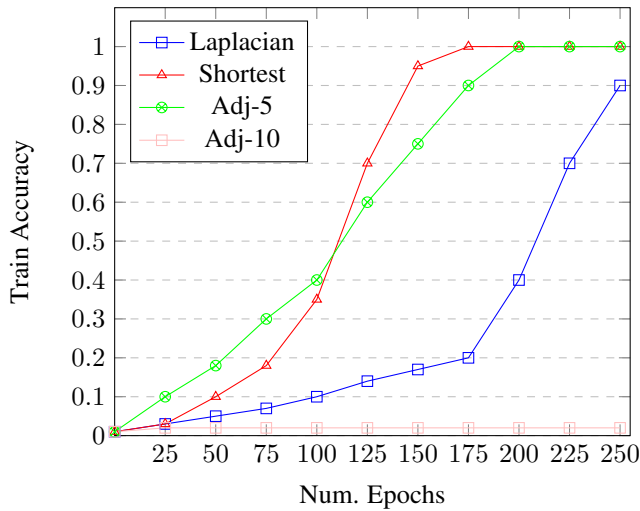
\begin{figure}
\centering
\begin{tikzpicture}
\begin{axis}[
    xlabel={Num. Epochs},
    ylabel={Train Accuracy},
    xmin=0, xmax=255,
    ymin=0, ymax=1.1,
    xtick={25,50,75,100,125,150,175,200,225,250}, 
    ytick={0,0.1,0.2,0.3,0.4,0.5,0.6,0.7,0.8,0.9,1.0},
    legend pos=north west,
    ymajorgrids=true,
    grid style=dashed,
]
\addplot[
    color=blue,
    mark=square,
    ]
    coordinates {
    (0,0.01)(25,0.03)(50,0.05)(75,0.07)(100,0.1)(125,0.14)(150,0.17)(175,0.2)(200,0.4)(225,0.7)(250,0.9)
    };
    \addlegendentry{Laplacian}
\addplot[
    color=red,
    mark=triangle,
    ]
    coordinates {
    (0,0.01)(25,0.03)(50,0.1)(75,0.18)(100,0.35)(125,0.7)(150,0.95)(175,1.0)(200,1.0)(225,1.0)(250,1.0)
    };
    \addlegendentry{Shortest}
\addplot[
    color=green,
    mark=otimes,
    ]
    coordinates {
    (0,0.01)(25,0.1)(50,0.18)(75,0.3)(100,0.4)(125,0.6)(150,0.75)(175,0.9)(200,1.0)(225,1.0)(250,1.0)
    };
    \addlegendentry{Adj-5}
\addplot[
    color=pink,
    mark=square,
    ]
    coordinates {
    (0,0.01)(25,0.02)(50,0.02)(75,0.02)(100,0.02)(125,0.02)(150,0.02)(175,0.02)(200,0.02)(225,0.02)(250,0.02)
    };
    \addlegendentry{Adj-10}

\end{axis}
\end{tikzpicture}
\caption{Learning to retrieve random Erdos graphs.}
\label{fig:random_erdos}
\end{figure}

\section{Computational Runtime and Memory Use}
\label{appendix:computationaltime}




In our implementation, the step of computing the positional encodings as well as expanding the r-hops of the graph is done in the same process for shortest-path and adjacency positional encodings; thus this step always occur and we found that implementing it via iterative matrix multiplications of the adjacency matrix gave the fastest results. How this scales with the $r$-size can be found in Figure \ref{fig:comp_time_pe}. Since each increment of the $r$-size results in an additional matrix multiplication, the linear increase is expected. The spectral positional encoding has the same additive runtime per graph across $r$-sizes of $1.3\times10^{-3}$ seconds. These matrix multiplications are done on CPU rather than GPUs, but running them on GPUs could results in speed-ups. However, the runtime for computing these positional encodings is at least an order of magnitude smaller (per graph) than the runtime for running the subsequent GNN on a GPU, so there was no need to optimize this runtime further.

In Figures \ref{fig:comp_time1} and \ref{fig:comp_time2} we include actual runtime of the GNN (on GPU) of different positional encodings and hops sizes, juxtaposed with the density of the modified graphs, for the ZINC and CIFAR10 datasets. Note, we are here excluding the computation of the positional encoding on the input graph, which can be found in Figure \ref{fig:comp_time_pe}.

Most graphs to which GNNs are applied to are connected and typically the number of edges are greater than the number of nodes, i.e., $|E| \geq |V|$. Since all established GNNs make use of the edges in one way or another, the number of edges usually determines the asymptotic behavior of the runtime and memory use, i.e., they are in $O(|E|)$. With modern deep learning and specialized graph learning framework, GPU-parallelization and other more technical aspect affect memory and runtime. Thus, Figures \ref{fig:comp_time1} and \ref{fig:comp_time2} compare theoretical runtime (dominated by the density) with actual runtime of code run on GPUs. We find that density and actual runtime is strongly correlated. In Figure \ref{fig:comp_mem1} we include the memory use for increasing radius on ZINC dataset, and find its roughly linear with the density as well. 

\section{Homophily Score and Performance}
\label{appendix:homophily}

We include experiments to investigate the correlation between homophily score \citep{https://doi.org/10.48550/arxiv.2106.06134} and performance when increasing hops size. This applies to the node classification datasets, CLUSTER and PATTERN, that we used. We split the test set into three buckets, which is just a sorted segmentation of the graphs with increasing homology scores. We evaluate trained Gated-GCN models with adjacency positional encodings for $r$-values 1 and 2 (at $r=2$ almost all graphs are fully connected). See Tables \ref{table:cluster_homo} and \ref{table:pattern_homo} for results. 

We find that high homophily score correlates much stronger with performance when $r=1$ than it does at $r=2$. This indicates that increased r-size diminishes the reliance on homophily as an inductive bias.  

\begin{table*}[h!]
\newcommand{\abcdresults}[8]{ #2 & #1 & #3 & #4 & #5 \\}
\caption{Increasing $r$  on CLUSTER homophily buckets.}
\begin{center}
\begin{adjustbox}{max width=\textwidth}
\begin{tabular}{l l l l l l l l l}
\hline
  \abcdresults{homophily score:}{density}{0.315$\pm$0.008}{0.336$\pm$0.006}{0.366$\pm$0.0160}{-}{-}{-}
  \abcdresults{r=1}{.31}{71.494$\pm$0.619}{72.361$\pm$0.462}{73.812$\pm$0.265}{-}{-}{-}
  \abcdresults{r=2}{$>$.99}{77.010$\pm$0.234}{76.660$\pm$ 0.143}{76.965$\pm$0.242}{-}{-}{-}
\end{tabular}
\end{adjustbox}
\end{center}
\label{table:cluster_homo}
\centering
\vspace{-0.5cm}
\end{table*}

\begin{table*}[h!]
\newcommand{\abcdresults}[8]{ #2 & #1 & #3 & #4 & #5 \\}
\caption{Increasing $r$  on PATTERN homophily buckets.}
\begin{center}
\begin{adjustbox}{max width=\textwidth}
\begin{tabular}{l l l l l l l l l}
\hline
  \abcdresults{homophily score:}{density}{0.567$\pm$0.031}{0.652$\pm$0.026}{0.774$\pm$0.051}{-}{-}{-}
  \abcdresults{r=1}{.43}{83.958$\pm$0.031}{86.862$\pm$0.021}{92.154$\pm$0.492}{-}{-}{-}
  \abcdresults{r=2}{$>$.99}{84.197$\pm$0.048}{87.214$\pm$ 0.062}{87.085$\pm$0.082}{-}{-}{-}
\end{tabular}
\end{adjustbox}
\end{center}
\label{table:pattern_homo}
\centering
\vspace{-0.5cm}
\end{table*}


\begin{figure}
\centering
\begin{tikzpicture}
\begin{axis}[
    xlabel={$r$-hops},
    ylabel={Avg. time per graph, $\times 10^{-4}$},
    xmin=0.5, xmax=10.5,
    ymin=-0.5, ymax=12,
    xtick={1,2,3,4,5,6,7,8,9,10}, 
    ytick={0,1,2,3,4,5,6,7,8,9,10,11},
    legend pos=south east,
    ymajorgrids=true,
    grid style=dashed,
]
\addplot[
    color=blue,
    mark=square,
    ]
    coordinates {
    (1,0.0)
    (2,4.634278416633606)
    (3,5.641669233640036)
    (4,6.457052429517111)
    (5,7.393426895141602)
    (6,7.970360318819682)
    (7,9.065692623456319)
    (8,9.771919449170430)
    (9,10.467679301897686)
    (10,11.350792646408081)
    };
    \addlegendentry{Adj, Short, LP}
\addplot[
    color=black,
    mark=triangle,
    ]
    coordinates {
    (1,1.05263)(2, 2.03008)(3, 3.00752)(4, 3.90978)(5, 4.66166)(6, 5.33835)(7, 5.93985)(8, 6.39098)(9, 6.76692)(10, 7.06767)
    };
    \addlegendentry{Density}

\end{axis}
\end{tikzpicture}\vspace{-.1in}
\caption{P.E. Computational Time on ZINC/molecules. The average densities across graphs for $r$ from 1 through 10 are $.14$, $.27$, $.40$, $.52$, $.62$, $.71$, $.79$, $.85$, $.90$, and $.94$.}
\label{fig:comp_time_pe}
\end{figure}

\begin{figure}
\centering
\begin{tikzpicture}
\begin{axis}[
    xlabel={$r$-hops},
    ylabel={Avg. time per epoch},
    xmin=0.5, xmax=10.5,
    ymin=12, ymax=30,
    xtick={1,2,3,4,5,6,7,8,9,10}, 
    ytick={0,5,10,15,20,25,30},
    legend pos=south east,
    ymajorgrids=true,
    grid style=dashed,
]
\addplot[
    color=blue,
    mark=square,
    ]
    coordinates {
    (1,14.21)(2,18.13)(3,19.44)(4,21.00)(5,23.50)(6,24.51)(7,25.28)(8,26.50)(9,27.33)(10,27.62)
    };
    \addlegendentry{Laplacian}
\addplot[
    color=red,
    mark=triangle,
    ]
    coordinates {
    (1,15.00)(2,18.23)(3,15.50)(4,21.34)(5,21.96)(6,24.26)(7,24.53)(8,25.46)(9,27.75)(10,28.10)
    };
    \addlegendentry{Shortest}
\addplot[
    color=green,
    mark=otimes,
    ]
    coordinates {
    (1,15.03)(2,18.81)(3,19.25)(4,21.49)(5,22.34)(6,23.87)(7,24.55)(8,25.91)(9,27.37)(10,27.25)
    };
    \addlegendentry{Adj}
\addplot[
    color=orange,
    mark=diamond,
    ]
    coordinates {
    (1,15.54)(2,18.74)(3,20.11)(4,22.32)(5,23.47)(6,25.00)(7,25.71)(8,26.20)(9,28.32)(10,28.81)
    };
    \addlegendentry{Adj-CLS}
\addplot[
    color=black,
    mark=triangle,
    ]
    coordinates {
    (1,4.466)(2,8.613)(3,12.76)(4,16.588)(5,19.778)(6,22.649)(7,25.201)(8,27.115)(9,28.71)(10,29.986)
    };
    \addlegendentry{Density}

\end{axis}
\end{tikzpicture}\vspace{-.1in}
\caption{GNN Computational Time on ZINC/molecules}
\label{fig:comp_time1}
\end{figure}
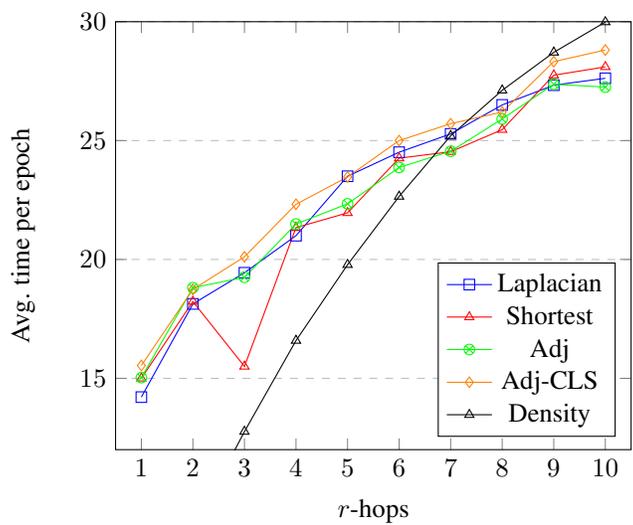

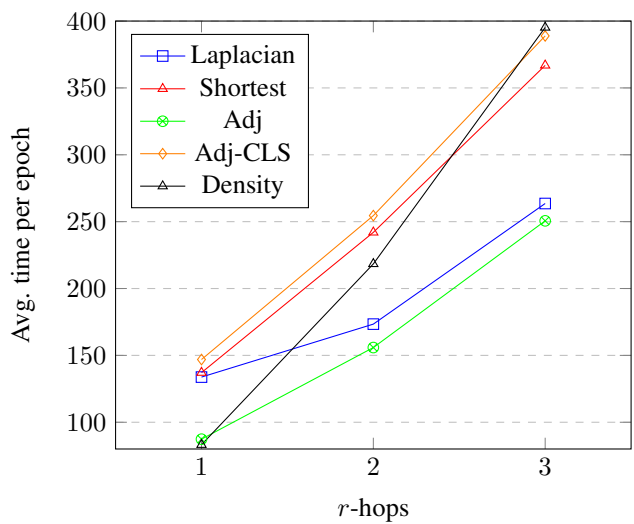
\begin{figure}
\centering
\begin{tikzpicture}
\begin{axis}[
    xlabel={$r$-hops},
    ylabel={Avg. time per epoch},
    xmin=0.5, xmax=3.5,
    ymin=80, ymax=400,
    xtick={1,2,3}, 
    ytick={100,150,200,250,300,350,400},
    legend pos=north west,
    ymajorgrids=true,
    grid style=dashed,
]
\addplot[
    color=blue,
    mark=square,
    ]
    coordinates {
    (1,133.78)(2,173.38)(3,263.57)
    };
    \addlegendentry{Laplacian}
\addplot[
    color=red,
    mark=triangle,
    ]
    coordinates {
    (1,137.10)(2,241.99)(3,366.85)
    };
    \addlegendentry{Shortest}
\addplot[
    color=green,
    mark=otimes,
    ]
    coordinates {
    (1,87.34)(2,155.86)(3,250.60)
    };
    \addlegendentry{Adj}
\addplot[
    color=orange,
    mark=diamond,
    ]
    coordinates {
    (1,146.91)(2,254.51)(3,388.89)
    };
    \addlegendentry{Adj-CLS}
\addplot[
    color=black,
    mark=triangle,
    ]
    coordinates {
    (1,83.2)(2,218.4)(3,395.2)
    };
    \addlegendentry{Density}

\end{axis}
\end{tikzpicture}\vspace{-.1in}
\caption{GNN Computational Time on CIFAR10}
\label{fig:comp_time2}
\end{figure}

\begin{figure}
\centering
\begin{tikzpicture}
\begin{axis}[
    xlabel={$r$-hops},
    ylabel={MiB memory use},
    xmin=0.5, xmax=10.5,
    ymin=1100, ymax=2700,
    xtick={1,2,3,4,5,6,7,8,9,10}, 
    ytick={1300,1400,1500,1600,1700,1800,1900,2000,2100,2200,2300,2400, 2500},
    legend pos=south east,
    ymajorgrids=true,
    grid style=dashed,
]
\addplot[
    color=green,
    mark=otimes,
    ]
    coordinates {
    (1, 1371)(2, 1453)(3, 1553)(4, 1677)(5, 1677)(6, 1699)(7, 1927)(8, 2291)(9, 2283)(10,2447)
    };
    \addlegendentry{Adj}
\addplot[
    color=black,
    mark=triangle,
    ]
    coordinates {
    (1,417.732)(2, 805.626)(3, 1193.52)(4, 1551.58)(5, 1849.96)(6, 2118.5)(7, 2357.2)(8, 2536.23)(9, 2685.42)(10, 2804.77)
    };
    \addlegendentry{Density}

\end{axis}
\end{tikzpicture}\vspace{-.1in}
\caption{GNN Memory use on ZINC/molecules}
\label{fig:comp_mem1}
\end{figure}
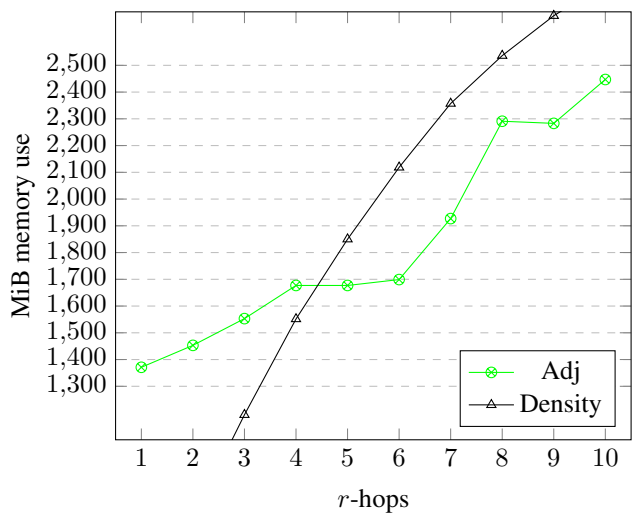

\section{Optimal Positional Encoding and Hops Size}
\label{appendix:optimal}


Again, we recommend the adjacency positional encodings together with the CLS-node. We find that in terms of ranked performance on the 6 datasets, adjacency- and spectral positional encodings perform at the same level, but the spectral encoding perform considerably worse on the ZINC datasets, while the differences are smaller on the other datasets. The spectral encoding hardcode global-to-local information on the nodes and the size of the encoding-vector is a hyperparameter; we found performance not to be too sensitive to this hyperparameter but future work could further investigate this. Spectral embeddings also use less memory as it does not encode its embeddings as edge-features; however, since information still is propagated along edges we find this memory saving to be significant but not asymptotically different. Adjacency encoding breaks down faster as the $r$-size is increased compare to the other positional encodings, we believe this to be due to the corresponding increase in size of the embedding-vectors and its introducing low-signal information that is also easy to overfit to, e.g., the number of paths of length 10 between two nodes (where any edge can be used multiple times). The Erdos experiments in Appendix \ref{appendix:additionaleval} support this observation. However, all in all, the adjacency encoding stands out slightly considering the performance, runtime, memory use, and toy experiments. Furthermore, the CLS-node is part of the best performing configuration more times than it is not, and it has the additional advantage of leading to peak performance at lower $r$-sizes where in some cases it also has reduced runtime and memory use compared instead to increasing the $r$-size.


In this work, we do not find a fixed $r$-size that is optimal for all datasets. The optimal $r$ depends on the dataset and the amount of compute available. Given the fixed amount of compute used in our experiments, we found that all the best performance was found at $r$-size four or smaller. We provide heuristic for selecting a good $r$-size but ultimately it depends on the amount of compute and memory available.

\end{document}